\documentclass{article}

\usepackage{iclr2021_conference,times}


\usepackage[utf8]{inputenc} 
\usepackage[T1]{fontenc}    
\usepackage{hyperref}       
\usepackage{url}            
\usepackage{booktabs}       
\usepackage{amsfonts}       
\usepackage{nicefrac}       
\usepackage{microtype}      

\usepackage{smile}
\usepackage{xcolor}
\usepackage{todonotes}
\newcommand{\ifcomments}{\iftrue}

\newcommand{\shortsection}[1]{\vspace{1ex}\noindent{\bf #1.}}

\definecolor{ForestGreen}{cmyk}{0.864, 0.0, 0.429, 0.396}

\def \sgn {\mathrm{sgn}}
\def \Risk {\mathrm{Risk}}
\def \AdvRisk {\mathrm{AdvRisk}}
\def \AdvRob {\mathrm{AdvRob}}

\def \pow {\mathsf{pow}}

\usepackage[english, ruled, linesnumbered, vlined]{algorithm2e}
\usepackage{setspace}
\usepackage{subfigure}

\title{Improved Estimation of Concentration Under $\ell_p$-Norm Distance Metrics Using Half Spaces}


\author{Jack B. Prescott, Xiao Zhang and David Evans \\
Department of Computer Science\\
University of Virginia\\
\textsf{\small \{jbp2jn, shawn, evans\}@virginia.edu}
}

%


\begin{document}

\maketitle

\begin{abstract}
Concentration of measure has been argued to be the fundamental cause of adversarial vulnerability. \citet{mahloujifar2019empirically} presented an empirical way to measure the concentration of a data distribution using samples, and employed it to find lower bounds on intrinsic robustness for several benchmark datasets. However, it remains unclear whether these lower bounds are tight enough to provide a useful approximation for the intrinsic robustness of a dataset. To gain a deeper understanding of the concentration of measure phenomenon, we first extend the Gaussian Isoperimetric Inequality to non-spherical Gaussian measures and arbitrary $\ell_p$-norms ($p \geq 2$). We leverage these theoretical insights to design a method that uses half-spaces to estimate the concentration of any empirical dataset under $\ell_p$-norm distance metrics. Our proposed algorithm is more efficient than \citet{mahloujifar2019empirically}'s, and our experiments on synthetic datasets and image benchmarks demonstrate that it
is able to find much tighter intrinsic robustness bounds. These tighter estimates provide further evidence that rules out intrinsic dataset concentration as a possible explanation for the adversarial vulnerability of state-of-the-art classifiers.
\end{abstract}

\section{Introduction}

Despite achieving exceptional performance in benign settings, modern machine learning models have been shown to be highly vulnerable to inputs, known as \emph{adversarial examples}, crafted with targeted but imperceptible perturbations \citep{szegedy2014intriguing,goodfellow2015explaining}. This discovery has prompted a wave of research studies to propose defense mechanisms, including heuristic approaches \citep{papernot2016distillation,madry2017towards,zhang2019theoretically} and certifiable methods \citep{wong2018provable,gowal2018effectiveness,cohen2019certified}. Unfortunately, none of these methods can successfully produce adversarially-robust models, even for classification tasks on toy datasets such as CIFAR-10.
To explain the prevalence of adversarial examples, a line of theoretical works \citep{gilmer2018adversarial,fawzi2018adversarial,shafahi2018adversarial,dohmatob2018generalized,bhagoji2019lower} have proven upper bounds on the maximum achievable adversarial robustness by imposing different assumptions on the underlying metric probability space. In particular, \citet{mahloujifar2019curse} generalized the previous results showing that adversarial examples are inevitable as long as the input distributions are concentrated with respect to the perturbation metric. Thus, the question of whether or not natural image distributions are concentrated is highly relevant, as if they are it would rule out any possibility of there being adversarially robust image classifiers.
\

Recently, \citet{mahloujifar2019empirically} proposed an empirical method to measure the concentration of an arbitrary distribution using data samples, then employed it to estimate a lower bound on \emph{intrinsic robustness} (see Definition \ref{def:intrinsic rob} for its formal definition) for several image benchmarks. By demonstrating the gap between the estimated bounds of intrinsic robustness and the robustness performance achieved by the best current models, they further concluded concentration of measure is not the sole reason behind the adversarial vulnerability of existing classifiers for benchmark image distributions. However, due to the heuristic nature of the proposed algorithm, it remains elusive whether the estimates it produces can serve as useful approximations of the underlying intrinsic robustness limits, thus hindering understanding of how much of the actual adversarial risk can be explained by the concentration of measure phenomenon.

In this work, we address this issue by first characterizing the optimum of the actual concentration problem for general Gaussian spaces, then using our theoretical insights to develop an alternative algorithm for measuring concentration empirically that significantly improves both the accuracy and efficiency of estimates of intrinsic robustness. While we do not demonstrate a specific classifier which achieves this robustness upper bound, our results rule out inherent image distribution concentration as the reason for our current inability to find adversarially robust models.


\shortsection{Contributions} 
We generalize the Gaussian Isoperimetric Inequality to non-spherical Gaussian distributions and $\ell_p$-norm distance metrics with $p\geq 2$ (including $\ell_\infty$) (Theorem~\ref{thm:gii ext}). Motivated by the optimal concentration results for special Gaussian spaces (Remark \ref{rem:gii ext}), we develop a sample-based algorithm to estimate the concentration of measure using half spaces that works for arbitrary distribution and any $\ell_p$-norm distance (Section~\ref{sec:empirical}). Compared with prior approaches, we empirically demonstrate the significant increase in efficacy of our method under $\ell_\infty$-norm distance metric (Section~\ref{sec:experiments}). Not only does the proposed method converge to its limit with an order of magnitude fewer data (Section \ref{sec:exp convergence}), it also finds a much tighter lower bound of intrinsic robustness for both simulated datasets whose underlying concentration function is analytically derivable and various benchmark image datasets (Section \ref{sec:exp tight estimates}). In particular, we improve the best current estimated lower bound of intrinsic robustness from approximately $82\%$ to above $93\%$ for CIFAR-10 under $\ell_\infty$-norm bounded perturbations with $\epsilon=8/255$. These tighter concentration estimates produced by our algorithm provide strong evidence that concentration of measure should not be considered as the main cause of adversarial vulnerability, at least for the image benchmarks evaluated in our experiments.

\shortsection{Related Work}
Several prior works have sought to empirically estimate lower bounds on intrinsic robustness using data samples. The pioneering work of \citet{gilmer2018adversarial} introduced the connection between adversarial examples and the concentration phenomenon for uniform $n$-spheres, then proposed a simple heuristic to find a half space that expands slowly under Euclidean distance for the MNIST dataset. Our work can be seen as a strict generalization of \citet{gilmer2018adversarial}'s, which applies to arbitrary $\ell_p$-norm distance metrics (including $\ell_\infty$).
By characterizing the optimal transport cost between conditional distributions, \citet{bhagoji2019lower} estimated a lower bound on the best possible adversarial robustness for several image datasets. However, when applied to adversaries beyond $\ell_2$, such as $\ell_\infty$, the lower bound produced by their method is not informative (that is, it is close to zero). The most relevant previous work is \citet{mahloujifar2019empirically}, which proposed a general method for measuring concentration using special collections of subsets. Although the optimal value of the considered empirical concentration problem is proven to asymptotically converge to the actual concentration, there is no guarantee that the proposed searching algorithm for solving the empirical problem finds the optimum. Our approach follows the framework introduced by \citet{mahloujifar2019empirically}'s, but considers a different collection of subsets for the empirical concentration problem. This not only results in optimality for theoretical Gaussian distributions, but also significantly improves the estimation performance for typical image benchmarks. 

Another line of work attempts to provide estimates of intrinsic robustness upper bounds based on generative assumptions. In order to justify the theoretically-derived impossibility results, \citet{fawzi2018adversarial} estimated the smoothness parameters of the state-of-the-art generative models on CIFAR-10 and SVHN datasets, which yield approximated upper bounds on adversarial robustness for any classifiers. \citet{zhang2020understanding} generalized their results to non-smoothed data manifolds, such as datasets that can be captured by a conditional generative model. However, these methods only work for simulated generative distributions, which may deviate from the 
actual distributions they are intended to understand. 


\shortsection{Notation}
For any $n\in\ZZ^+$, denote by $[n]$ the set $\{1,2,\ldots, n\}$. Lowercase boldface letters denote vectors and uppercase boldface letters represent matrices. 
For any vector $\bx$ and $p\in[1,\infty)$, let $x_j$, $\|\bx\|_p$ and $\|\bx\|_\infty$ be the $j$-th element, the $\ell_p$-norm and the $\ell_\infty$-norm of $\bx$. For any matrix $\Ab$, $\Bb$ is said to be a square root of $\Ab$ if $\Ab = \Bb\Bb$, and the induced matrix $p$-norm of $\Ab$ is defined as $\|\Ab\|_p = \sup_{\bx\neq \zero}\{\|\Ab\bx\|_p/\|\bx\|_p\}$.
Denote by $\cN(\btheta,\bSigma)$ the Gaussian distribution with mean $\btheta$ and covariance matrix $\bSigma$. Let $\gamma_n$ be the probability measure of $\cN(\zero,\Ib_n)$,
where $\Ib_n$ denotes the identity matrix. Let $\Phi(\cdot)$ be the cumulative distribution function of $\cN(0,1)$ and $\Phi^{-1}(\cdot)$ be its inverse.
For any set $\cA$, let $\pow(\cA)$ and $\ind_{\cA}(\cdot)$ be all measurable subsets and the indicator function of $\cA$. 
Let $(\cX,\mu, \Delta)$ be a metric probability space, where $\Delta:\cX\times\cX\rightarrow\RR_{\geq 0}$ denotes a distance metric on $\cX$. Define the empirical measure with respect to a sample set $\{\bx_i\}_{i\in[m]}$ as $\hat\mu_m(\cA) = \frac{1}{m}\sum_{i\in[m]} \ind_{\cA}(\bx_i)$, $\forall\cA\in\pow(\cX)$.
Let $\cB(\bx, \epsilon, \Delta) = \{\bx'\in\cX: \Delta(\bx',\bx)\leq \epsilon\}$ be the ball around $\bx$ with $\epsilon$ radius. 
Define the $\epsilon$-expansion of $\cA$ as $\cA_\epsilon^{(\Delta)} = \{\bx\in\cX:\exists\:\bx'\in\cB(\bx,\epsilon,\Delta)\cap\cA\}$.

\section{Preliminaries} 
\label{sec:concentration}


In this section, we introduce the problem of measuring concentration and its connection to adversarial robustness.
Consider a metric probability space of instances $(\cX,\mu, \Delta)$.
Given parameters $\epsilon\geq 0$ and $\alpha> 0$, the concentration of measure problem\footnote{The standard notion of concentration of measure \citep{talagrand1995concentration} corresponds to the case where $\alpha=0.5$.} can be cast as the following optimization problem:
\begin{align}
\label{eq:actual concentration}
    \minimize_{\cE\in\mathsf{Pow}(\cX)}\: \mu\big(\cE_\epsilon^{(\Delta)}\big)\quad \text{ subject to }\: \mu(\cE)\geq \alpha.
\end{align}
We focus on the case where $\Delta$ is some $\ell_p$-norm distance metric (including $\ell_\infty$) in this work.

Concentration of measure has been shown to be closely related to adversarial examples \citep{gilmer2018adversarial, fawzi2018adversarial, mahloujifar2019curse}. In particular, one can prove that for a given robust learning problem, if the input distribution is concentrated with respect to the perturbation metric, no adversarially robust model exists.
The concentration parameter (which corresponds to the optimal value of optimization problem \eqref{eq:actual concentration}) determines an inherent upper bound on the maximum adversarial robustness that \emph{any} model can achieve for the given problem.

To explain the connection between concentration of measure and robust learning in a more formal way, we lay out the definition of \emph{adversarial risk} that we work with. We draw this definition from several previous works, including \citet{gilmer2018adversarial,bubeck2018adversarial,mahloujifar2019curse,mahloujifar2019empirically}.\footnote{Other related definitions, such as the one used in most empirical works for robustness evaluation, are equivalent to this, as long as small perturbations preserve the ground truth. See \citet{diochnos2018adversarial} for a detailed comparison of these and other definitions of adversarial robustness.}

\begin{definition}[Adversarial Risk]
\label{def:adv risk}
Let $(\cX,\mu,\Delta)$ be the input metric probability space. Assume $f^*$ is the underlying ground-truth classifier that gives labels to any input. Given classifier $f$ and $\epsilon\geq 0$, the \emph{adversarial risk} of $f$ with respect to $\epsilon$-perturbations measured by $\Delta$ is defined as:
\begin{align*}
    \AdvRisk_{\epsilon}(f) = \Pr_{\bx\sim\mu} \big[\exists\: \bx'\in\cB(\bx,\epsilon,\Delta) \text{ s.t. } f(\bx')\neq f^*(\bx')\big].
\end{align*}
Correspondingly, we define the \emph{adversarial robustness} of $f$ as $\AdvRob_\epsilon(f) = 1-\AdvRisk_{\epsilon}(f).$ 
\end{definition}

When $\epsilon=0$, adversarial risk degenerates to standard risk. In other words, it holds for any $f$ that $\AdvRisk_{0}(f) = \Risk(f) := \Pr_{\bx\sim\mu}[f(\bx)\neq f^*(\bx)]$.
We remark that this definition assumes the existence of an underlying ground-truth labeling function, which does not apply to the agnostic setting where inputs can have non-deterministic labels.

Initially introduced in \citet{mahloujifar2019empirically}, intrinsic robustness captures the maximum adversarial robustness that can be achieved by any imperfect classifier for a robust classification problem. 
\begin{definition}[Intrinsic Robustness]
\label{def:intrinsic rob}
Consider the same setting as in Definition \ref{def:adv risk}. For any $\alpha>0$, let $\cF_{\alpha}=\{f:\Risk(f)\geq\alpha\}$ be the set of imperfect classifiers whose risk is at least $\alpha$. Then the \emph{intrinsic robustness} of the given robust classification problem with respect to $\cF_\alpha$ is defined as:
\begin{align*}
\overline{\AdvRob}_{\epsilon}(\cF_\alpha) = 1-\inf_{f\in\cF_\alpha}\big\{\AdvRisk_\epsilon(f) \big\} = \sup_{f\in\cF_\alpha}\{\AdvRob_{\epsilon}(f)\}.
\end{align*}
\end{definition}


It is worth noting that the value of $\overline\AdvRob_{\epsilon}(\cF_\alpha)$ is only determined by the underlying input data distribution, the perturbation set  and the risk threshold parameter $\alpha$, which is independent of the model class one would choose for learning. 

By relating the robustness of a classifier to the $\epsilon$-expansion of its induced error region, the following lemma, proved in \citet{mahloujifar2019curse}, establishes a fundamental connection between the concentration of measure and the intrinsic robustness one can hope for a robust classification problem. 
\begin{lemma}
\label{lem:connect}
Consider the same setting as in Definition \ref{def:intrinsic rob}.  Let $h_\mu^{(\Delta)}(\alpha,\epsilon)$ be the concentration function that captures the optimal value of the concentration of measure problem \eqref{eq:actual concentration}:
\begin{align*}
    h_\mu^{(\Delta)}(\alpha,\epsilon) = \inf\big\{\mu\big(\cE_{\epsilon}^{(\Delta)}\big): \cE\in\pow(\cX) \text{ and }   \mu(\cE)\geq\alpha \big\}.
\end{align*}
Then, $\overline\AdvRob_\epsilon(\cF_\alpha) = 1 - h_\mu^{(\Delta)}(\alpha, \epsilon)$ holds for any $\alpha>0$ and $\epsilon\geq 0$.
\end{lemma}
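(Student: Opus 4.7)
The plan is to reduce the intrinsic robustness quantity to the concentration function by identifying classifiers with their error regions. Given $f$, define its error region as $\cE_f = \{\bx\in\cX : f(\bx)\neq f^*(\bx)\}$. Then $\Risk(f) = \mu(\cE_f)$, so the constraint $f\in\cF_\alpha$ translates exactly into the measure constraint $\mu(\cE_f)\geq \alpha$ appearing in the concentration problem. The first step is to verify the identity
\begin{equation*}
\AdvRisk_\epsilon(f) = \mu\bigl((\cE_f)_\epsilon^{(\Delta)}\bigr),
\end{equation*}
which is essentially a rewriting: by Definition \ref{def:adv risk}, a point $\bx$ contributes to $\AdvRisk_\epsilon(f)$ iff there exists $\bx'\in \cB(\bx,\epsilon,\Delta)$ with $\bx'\in \cE_f$, i.e.\ iff $\bx\in(\cE_f)_\epsilon^{(\Delta)}$.

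Given this reformulation, I would prove the two matching inequalities. For the $\geq$ direction, any $f\in\cF_\alpha$ yields a candidate set $\cE_f\in\pow(\cX)$ with $\mu(\cE_f)\geq\alpha$, so $\AdvRisk_\epsilon(f) = \mu((\cE_f)_\epsilon^{(\Delta)}) \geq h_\mu^{(\Delta)}(\alpha,\epsilon)$, and taking the infimum over $f\in\cF_\alpha$ gives $\inf_{f\in\cF_\alpha}\AdvRisk_\epsilon(f)\geq h_\mu^{(\Delta)}(\alpha,\epsilon)$. For the $\leq$ direction, take any measurable $\cE$ with $\mu(\cE)\geq\alpha$ and build a classifier $f_\cE$ by setting $f_\cE(\bx)=f^*(\bx)$ for $\bx\notin\cE$ and $f_\cE(\bx)$ equal to any label different from $f^*(\bx)$ for $\bx\in\cE$ (possible since the label space has at least two elements in any nontrivial classification setting). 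Then $\cE_{f_\cE}=\cE$, hence $f_\cE\in\cF_\alpha$ and $\AdvRisk_\epsilon(f_\cE)=\mu(\cE_\epsilon^{(\Delta)})$. Taking the infimum over admissible $\cE$ yields $\inf_{f\in\cF_\alpha}\AdvRisk_\epsilon(f)\leq h_\mu^{(\Delta)}(\alpha,\epsilon)$. Combining both inequalities and plugging into Definition \ref{def:intrinsic rob} gives $\overline\AdvRob_\epsilon(\cF_\alpha)=1-h_\mu^{(\Delta)}(\alpha,\epsilon)$.

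The one step that requires care, and which I expect to be the main (if minor) obstacle, is the $\leq$ direction: the correspondence between measurable sets and classifiers is only clean if we can freely choose mislabeling on $\cE$, and we must ensure that the constructed $f_\cE$ remains measurable so that $\AdvRisk_\epsilon(f_\cE)$ is well defined. This is automatic when $f^*$ is measurable and $\cE$ is measurable, since $f_\cE$ differs from $f^*$ only on a measurable set and can be built, for instance, by taking a fixed measurable selector of a label distinct from $f^*(\bx)$. Once this measurability is noted, the rest of the argument is a direct bookkeeping between the two infima.
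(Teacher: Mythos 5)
Your proof is correct. The paper itself does not prove this lemma but defers it to \citet{mahloujifar2019curse}; your argument — identifying classifiers with their error regions via $\cE_f = \{\bx : f(\bx)\neq f^*(\bx)\}$, observing $\AdvRisk_\epsilon(f) = \mu((\cE_f)_\epsilon^{(\Delta)})$, and building $f_\cE$ from an arbitrary admissible set in the reverse direction — is the standard route, and the set-to-classifier construction you use is exactly the one the paper gives in its footnote in Section 6 (setting $f(\bx)=f^*(\bx)$ off $\cE$ and $f(\bx)\neq f^*(\bx)$ on $\cE$). Your remark on measurability of $f_\cE$ is the right caveat and is handled correctly.
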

Lemma \ref{lem:connect} suggests that one can characterize the intrinsic robustness limit for a robust classification problem by measuring the concentration of the input data with respect to the perturbation metric. 

In this paper, we aim to understand and empirically estimate the intrinsic robustness limit for typical robust classification tasks by measuring concentration.
It is worth noting that solving the concentration problem \eqref{eq:actual concentration} itself only shows the existence of an error region $\cE$ whose $\epsilon$-expansion has certain (small) measure. This further implies the possibility of existing an optimally robust classifier (with risk at least $\alpha$), whose robustness matches the intrinsic robustness limit $\overline{\AdvRob}_{\epsilon}(\cF_\alpha)$. However, actually finding such optimal classifier using a learning
algorithm might be a much more challenging task, which is beyond the scope of this work.

\section{Generalizing the Gaussian Isoperimetric Inequality} 
\label{sec:gii}

Before proceeding to introduce the proposed methodology for solving the concentration of measure problem, we first present our main theoretical results  of generalizing the Gaussian Isoperimetric Inequality. This theoretical result largely motivates our method.


To begin with, we introduce the Gaussian Ispoperimetric Inequality \citep{sudakov1978extremal,borell1975brunn}. It characterizes the optimum of the concentration problem \eqref{eq:actual concentration} with respect to standard Gaussian distribution and $\ell_2$-distance, where half spaces are proven to be the optimal sets. 
\begin{definition}[Half Space]
Let $\bw\in\RR^n$ and $b\in\RR$. Without loss of generality, assume $\|\bw\|_2=1$. An $n$-dimensional \emph{half space} with parameters $\bw$ and $b$ is defined as:
$$
\cH_{\bw,b} = \{\bz\in\RR^n: \bw^\top\bz + b \leq 0\}.
$$
\end{definition}

\begin{lemma}[Gaussian Isoperimetric Inequality]
\label{lem:gii std}
Consider the standard Gaussian space $(\RR^n,\gamma_n)$ with $\ell_2$-distance. Let $\cE\in\pow(\RR^n)$ and $\cH$ be a half space such that $\gamma_n(\cE) = \gamma_n(\cH)$, then for any $\epsilon\geq 0$, it holds that
$$
    \gamma_n\big(\cE_{\epsilon}^{(\ell_2)}\big) \geq \gamma_n\big(\cH_\epsilon^{(\ell_2)}\big) = \Phi\big(\Phi^{-1}\big(\gamma_n(\cE)\big)+\epsilon\big).
$$
\end{lemma}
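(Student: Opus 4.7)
The plan is to prove the inequality through the classical route of spherical isoperimetry combined with Poincaré's observation that Gaussian measure arises as a limit of uniform surface measures on high-dimensional spheres. First, I would dispatch the explicit right-hand formula. By rotational invariance of $\gamma_n$ and translation invariance of $\ell_2$, we may assume $\cH = \{\bz\in\RR^n : z_1 \leq -b\}$ for some $b\in\RR$, so that $\gamma_n(\cH) = \Phi(-b)$. The $\epsilon$-expansion is then simply $\cH_\epsilon^{(\ell_2)} = \{\bz : z_1 \leq -b+\epsilon\}$ (pushing any point inward by $\epsilon$ along the first axis saturates the constraint), whose measure is $\Phi(-b+\epsilon) = \Phi\big(\Phi^{-1}(\gamma_n(\cH)) + \epsilon\big)$. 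Thus the whole content of the lemma is the extremality claim $\gamma_n(\cE_\epsilon^{(\ell_2)}) \geq \gamma_n(\cH_\epsilon^{(\ell_2)})$ whenever $\gamma_n(\cE) = \gamma_n(\cH)$.

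To establish extremality, I would invoke Lévy's spherical isoperimetric inequality: on the sphere $\sqrt{N}\,S^{N-1}\subset\RR^N$ with its normalized uniform surface measure $\sigma_N$, geodesic (equivalently, chordal) caps minimize the measure of $\epsilon$-expansions among all Borel sets of prescribed measure. For a fixed target $\cE\in\pow(\RR^n)$, the strategy is to lift it to the cylinder $\tilde\cE_N = \pi_n^{-1}(\cE) \cap \sqrt{N}\,S^{N-1}$, where $\pi_n:\RR^N\to\RR^n$ projects onto the first $n$ coordinates, and to lift the competitor $\cH$ analogously to a set $\tilde\cH_N$ that is a spherical cap (since the cylinder over a Euclidean half-space intersected with a centered sphere is a cap). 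Applying Lévy's inequality at level $N$ yields $\sigma_N\big((\tilde\cE_N)_\epsilon^{(\ell_2)}\big) \geq \sigma_N\big((\tilde\cH_N)_\epsilon^{(\ell_2)}\big)$, and the desired Gaussian inequality will follow by sending $N\to\infty$ using Poincaré's limit, which asserts that $(\pi_n)_\#\sigma_N \Rightarrow \gamma_n$.

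The main obstacle is verifying that $\epsilon$-expansion commutes with the limit, that is, $\sigma_N\big((\tilde\cE_N)_\epsilon^{(\ell_2)}\big) \to \gamma_n(\cE_\epsilon^{(\ell_2)})$ and analogously on the half-space side. The containment $\pi_n((\tilde\cE_N)_\epsilon^{(\ell_2)}) \subseteq \cE_\epsilon^{(\ell_2)}$ is trivial because projection is $1$-Lipschitz, so the nontrivial direction is a lower bound: given $\bx\in\cE_\epsilon^{(\ell_2)}$ with witness $\bx'\in\cE$, I need to lift $\bx,\bx'$ to nearby points on $\sqrt{N}\,S^{N-1}$ whose chordal distance is at most $\epsilon + o(1)$. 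This boils down to the fact that, conditional on the first $n$ coordinates being bounded, the remaining $N-n$ coordinates of a uniform point on $\sqrt{N}\,S^{N-1}$ concentrate on a thin shell of radius $\sqrt{N-n}(1+o(1))$, so adjusting the tail coordinates to match costs vanishing chordal distance. Once this approximation is made precise (via elementary concentration of the $\chi^2$ distribution or the explicit density of marginals on the sphere), the squeeze argument finishes the proof. An alternative I would keep in reserve is Ehrhard's symmetrization, which avoids the spherical limit altogether by directly working in $\RR^n$, or the Bakry--Ledoux semigroup argument using the Ornstein--Uhlenbeck flow; both are more analytic but self-contained, and either can replace the limiting step if the approximation argument becomes cumbersome.
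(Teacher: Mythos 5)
The paper does not supply a proof of this lemma at all: it treats the Gaussian Isoperimetric Inequality as a known result and simply cites \citet{ledoux1996isoperimetry}, attributing the theorem to \citet{sudakov1978extremal} and \citet{borell1975brunn}. So there is no in-paper argument to compare against; your task was effectively to reconstruct a classical proof, and the route you chose (L\'evy's spherical isoperimetric inequality plus Poincar\'e's limit) is exactly the original Sudakov--Tsirelson/Borell argument. Your computation of the right-hand side is correct: after rotating so that $\cH = \{\bz : z_1 \le -b\}$, the $\ell_2$-expansion is $\{\bz : z_1 \le -b+\epsilon\}$, giving $\gamma_n(\cH_\epsilon^{(\ell_2)}) = \Phi(-b+\epsilon) = \Phi(\Phi^{-1}(\gamma_n(\cH))+\epsilon)$. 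You also correctly identify the convergence of $\epsilon$-expansions as the delicate step, and your mention of Ehrhard symmetrization and the Bakry--Ledoux semigroup method as alternative, more self-contained routes is accurate.

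There is, however, one genuine gap in the argument as written. You lift both $\cE$ and $\cH$ to $\sqrt{N}\,S^{N-1}$ and then ``apply L\'evy's inequality at level $N$'' to compare $\sigma_N\big((\tilde\cE_N)_\epsilon\big)$ with $\sigma_N\big((\tilde\cH_N)_\epsilon\big)$. But L\'evy's inequality only compares a set with a \emph{cap of the same spherical measure}, and the hypothesis $\gamma_n(\cE)=\gamma_n(\cH)$ does not give $\sigma_N(\tilde\cE_N)=\sigma_N(\tilde\cH_N)$ at any finite $N$ --- both sides merely converge to the common value $\gamma_n(\cE)$ as $N\to\infty$, and at finite $N$ the discrepancy can have either sign, so the inequality you invoke need not hold. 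The fix is standard but should be made explicit: at each $N$, replace $\tilde\cH_N$ by a cap $C_N$ chosen so that $\sigma_N(C_N)=\sigma_N(\tilde\cE_N)$ exactly; L\'evy then gives $\sigma_N\big((\tilde\cE_N)_\epsilon\big)\ge\sigma_N\big((C_N)_\epsilon\big)$, and since caps and their chordal expansions are again caps with explicit measures, one verifies directly (using $\sigma_N(\tilde\cE_N)\to\gamma_n(\cE)$ and the Poincar\'e limit for half-spaces) that $\sigma_N\big((C_N)_\epsilon\big)\to\Phi(\Phi^{-1}(\gamma_n(\cE))+\epsilon)$. A second, smaller point: for the $\cE$ side of the squeeze you actually only need the easy containment $\big(\tilde\cE_N\big)_\epsilon\subseteq\pi_n^{-1}\big(\cE_\epsilon^{(\ell_2)}\big)$, which yields $\limsup_N\sigma_N\big((\tilde\cE_N)_\epsilon\big)\le\gamma_n\big(\cE_\epsilon^{(\ell_2)}\big)$ modulo the continuity-set hypothesis for the weak limit; the harder lower bound via thin-shell concentration is needed only on the cap side, and for caps it follows from the explicit formulas rather than requiring the general lifting argument you sketch. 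With those two adjustments the proof closes.
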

The proof of the Gaussian Isoperimetric Inequality can be found in \citet{ledoux1996isoperimetry}.

Lemma \ref{lem:gii std} implies the concentration function of $(\RR^n,\gamma_n,\|\cdot\|_2)$ is $h_{\gamma_n}^{(\ell_2)}(\alpha,\epsilon) = \Phi\big(\Phi^{-1}(\alpha)+\epsilon\big)$, but only applies when the underlying distribution is a spherical Gaussian and the metric function is $\ell_2$-distance. Thus, it only gives a concentration function for estimating the the intrinsic robustness limit in a very restrictive setting. To understand the concentration of measure for more general problems, we prove the following theorem that extends the standard Gaussian Isoperimetric Inequality (Lemma \ref{lem:gii std}) to non-spherical Gaussian measure and general $\ell_p$-norm distance metrics for any $p\geq 2$. 

\begin{theorem}[Generalized Gaussian Isoperimetric Inequality]
\label{thm:gii ext}

Let $\nu$ be the probability measure of $\cN(\btheta,\bSigma)$, where $\btheta\in\RR^n$ and $\bSigma$ is a positive definite matrix in $\RR^{n\times n}$. 
Consider the probability space $(\RR^n,\nu)$ with $\ell_p$-norm distance, where $p\geq 2$ (including $\ell_\infty)$.
For any $\cE\in\pow(\RR^n)$ and $\epsilon\geq 0$,
\begin{align}
\label{eq:gii ext}
    \nu\big(\cE_\epsilon^{(\ell_p)}\big) \geq \Phi\big(\Phi^{-1}\big(\nu(\cE)\big)+\epsilon / \|\bSigma^{1/2}\|_p \big),
\end{align}
where $\bSigma^{1/2}$ is the square root of $\bSigma$, and $\|\bSigma^{1/2}\|_p$ denotes the induced matrix $p$-norm of $\bSigma^{1/2}$.
\end{theorem}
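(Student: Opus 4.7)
The plan is to reduce the theorem to the standard Gaussian Isoperimetric Inequality (Lemma~\ref{lem:gii std}) via a linear change of variables that whitens the Gaussian. Since $\bSigma$ is positive definite, its square root $\bSigma^{1/2}$ is invertible, so the affine map $T:\RR^n\to\RR^n$ defined by $T(\bz) = \bSigma^{1/2}\bz + \btheta$ is a bijection that pushes the standard Gaussian measure $\gamma_n$ forward to $\nu$. Setting $\tilde{\cE} = T^{-1}(\cE)$, we immediately obtain $\gamma_n(\tilde{\cE}) = \nu(\cE)$, which matches the right-hand side of \eqref{eq:gii ext} under $\Phi^{-1}$.

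The key step is a geometric inclusion that converts $\ell_p$-expansion in the original space into $\ell_2$-expansion in the whitened space: I would show that
\[
    \tilde{\cE}_{\delta}^{(\ell_2)} \subseteq T^{-1}\big(\cE_\epsilon^{(\ell_p)}\big), \quad \text{where } \delta = \epsilon/\|\bSigma^{1/2}\|_p.
\]
To prove this, take $\bz \in \tilde{\cE}_{\delta}^{(\ell_2)}$ with a witness $\bz^*\in\tilde{\cE}$ satisfying $\|\bz - \bz^*\|_2 \leq \delta$, and set $\bx = T(\bz)$, $\bx^* = T(\bz^*)\in\cE$. Then $\bx - \bx^* = \bSigma^{1/2}(\bz - \bz^*)$, and the chain of inequalities
\[
    \|\bx - \bx^*\|_p \;\leq\; \|\bSigma^{1/2}\|_p\cdot \|\bz - \bz^*\|_p \;\leq\; \|\bSigma^{1/2}\|_p \cdot \|\bz - \bz^*\|_2 \;\leq\; \epsilon
\]
delivers the desired inclusion. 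The first step is the definition of the induced matrix $p$-norm; the second is the standard monotonicity $\|\bu\|_p \leq \|\bu\|_2$ for $p\geq 2$ (including $p=\infty$). This is precisely where the hypothesis $p\geq 2$ enters the argument, and I would flag this norm comparison as the pivotal (though elementary) estimate: without $p\geq 2$ the inequality flips direction and the reduction breaks down.

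Combining these pieces concludes the proof. Monotonicity of $\nu$ under the inclusion, plus pushforward, gives
\[
    \nu\big(\cE_\epsilon^{(\ell_p)}\big) \;=\; \gamma_n\big(T^{-1}(\cE_\epsilon^{(\ell_p)})\big) \;\geq\; \gamma_n\big(\tilde{\cE}_{\delta}^{(\ell_2)}\big),
\]
and then an application of the standard Gaussian Isoperimetric Inequality to $\tilde{\cE}\subseteq\RR^n$ under $\gamma_n$ yields
\[
    \gamma_n\big(\tilde{\cE}_{\delta}^{(\ell_2)}\big) \;\geq\; \Phi\big(\Phi^{-1}(\gamma_n(\tilde{\cE})) + \delta\big) \;=\; \Phi\big(\Phi^{-1}(\nu(\cE)) + \epsilon/\|\bSigma^{1/2}\|_p\big),
\]
which is exactly \eqref{eq:gii ext}. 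The main conceptual content of the proof is the change-of-variables reduction and the observation that the relevant tightness constant is $\|\bSigma^{1/2}\|_p$; the hardest (and only nontrivial) subtlety is verifying the operator-norm/norm-comparison chain above, which ties the isotropic $\ell_2$-expansion back to the anisotropic $\ell_p$-expansion with the correct scaling factor. Handling $p=\infty$ requires no separate argument since $\|\bu\|_\infty \leq \|\bu\|_2$ still holds and the induced $\infty$-norm of $\bSigma^{1/2}$ is well-defined.
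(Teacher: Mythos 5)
Your proof is correct and follows essentially the same approach as the paper's: whiten via $\bz\mapsto\bSigma^{1/2}\bz+\btheta$, control the distortion by the induced matrix $p$-norm $\|\bSigma^{1/2}\|_p$, use the monotonicity $\|\bu\|_p\leq\|\bu\|_2$ for $p\geq 2$, and finish with the standard Gaussian Isoperimetric Inequality. The only (cosmetic) difference is sequencing: the paper first proves the spherical $\ell_p$ case by the inclusion $\cE_\eta^{(\ell_2)}\subseteq\cE_\eta^{(\ell_p)}$ and then transfers to general $\bSigma$ keeping the $\ell_p$-expansion on $\cA=T^{-1}(\cE)$, whereas you fold the norm comparison and the operator-norm bound into a single inclusion $\tilde{\cE}_\delta^{(\ell_2)}\subseteq T^{-1}\big(\cE_\epsilon^{(\ell_p)}\big)$ so that the GII is invoked directly for the $\ell_2$-expansion in the whitened space.
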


\begin{remark}
\label{rem:gii ext}
Theorem \ref{thm:gii ext} suggests that for general Gaussian distribution $\cN(\btheta,\bSigma)$ and any $\ell_p$-norm distance ($p\geq 2)$, the corresponding concentration function 
is lower bounded by $\Phi(\Phi^{-1}(\alpha)+\epsilon / \|\bSigma^{1/2}\|_p )$.
Due to the NP-hardness of approximating the matrix $p$-norm \citep{hendrickx2010matrix}, it is generally hard to infer whether the equality of \eqref{eq:gii ext} can be attained or not. 
However, for specific special Gaussian spaces, we can derive optimal subsets that achieve the lower bound. In particular, for the case where $\bSigma = \Ib_n$ and $p>2$, the optimum is attained when $\cE$ is a half space with axis-aligned weight vector (that is, $\bw = \eb_j$ for some $j\in[n]$). For the case where $\bSigma \neq \Ib_n$ and $p=2$, the optimal solution is a half space $\cH_{\bv_1,b}$, where $\bv_1$ is the eigenvector with respect to the largest eigenvalue of $\bSigma$. The proofs of these optimality results are provided in Appendix \ref{proof:rem gii ext}.
\end{remark}

\section{Empirically Measuring Concentration using Half Spaces} \label{sec:empirical}

Recall that the primary goal of this paper is to measure the concentration of an arbitrary distribution. However, for typical classification problems, we might not know the density function of the underlying distribution $\mu$, but instead we  usually have access to a finite set of $m$ instances $\{\bx_i\}_{i\in[m]}$ sampled from $\mu$. Following \citet{mahloujifar2019empirically}, we consider the following empirical counterpart of the actual concentration problem \eqref{eq:actual concentration}:
\begin{align}
\label{eq:empirical concentration}
    \minimize_{\cE\in\cG}\: \hat\mu_{m}\big(\cE_\epsilon^{(\Delta)}\big)\quad \text{ subject to }\: \hat\mu_m(\cE)\geq \alpha,
\end{align}
where $\hat\mu_m$ is the empirical measure based on $\{\bx_i\}_{i\in[m]}$ and $\cG\subseteq\pow(\cX)$ denotes a particular collection of subsets.
\citet{mahloujifar2019empirically} proposed the complement of union of $T$ hyperrectangles as $\cG$ for $\ell_\infty$ and the union of  $T$ balls for $\ell_2$. They proved that if one increases the complexity parameter $T$ and the sample size $m$ together in a careful way, the optimal value of the empirical concentration problem \eqref{eq:empirical concentration} converges to the actual concentration asymptotically. However, it is unclear how quickly it converges and how well the proposed heuristic algorithm in \citet{mahloujifar2019empirically} finds the optimum of \eqref{eq:empirical concentration}.

In this work, we argue that the set of half spaces is a superior choice for $\cG$ with respect to any $\ell_p$-norm distance. Apart from achieving the optimality for certain Gaussian spaces as discussed in Remark \ref{rem:gii ext}, estimating concentration using half spaces has several other advantages including the closed-form solution of $\ell_p$-distance to half-space (Lemma \ref{lem:lp dist to half space}) and its small sample complexity requirement for generalization (Theorem \ref{thm:generalization}). 
To be more specific, we focus on the following optimization problem based on the empirical measure $\hat\mu_m$ and the collection of half spaces $\cH\cS(n)$:
\begin{align}
\label{eq:empirical concentration hs}
    \minimize_{\cE\in\cH\cS(n)}\: \hat\mu_{m}\big(\cE_\epsilon^{(\ell_p)}\big)\quad \text{ subject to }\: \hat\mu_m(\cE)\geq \alpha,
\end{align}
where $\cH\cS(n)= \{\cH_{\bw,b}: \bw\in\RR^n, b\in\RR, \text{ and } \|\bw\|_2=1\}$ is the set of all half spaces in $\RR^n$.


The following lemma, proven in Appendix \ref{proof:lp formulation}, characterizes the closed-form solution of the $\ell_p$-norm distance between a point $\bx$ and a half space. Such a formulation enables an exact computation of the empirical measure with respect to the $\epsilon$-expansion of any half space. 

\begin{lemma}[$\ell_p$-Distance to Half Space] 
\label{lem:lp dist to half space}
Let $\cH_{\bw,b}\in\cH\cS(n)$ be an $n$-dimensional half space. For any vector $\bx\in\RR^n$, the $\ell_p$-norm distance ($p\geq 1$) from $\bx$ to $\cH_{\bw,b}$ is:
\begin{equation*}
    d_p(\bx,\cH_{\bw,b}) = \left\{
    \begin{array} {ll}
        0, & \quad \bw^\top\bx + b\leq 0;\\
        (\bw^\top\bx+b)/\|\bw\|_q, & \quad\text{otherwise}.
    \end{array} 
    \right.
\end{equation*}
Here, $q$ is a real number that satisfies $1/p + 1/q = 1$.
\end{lemma}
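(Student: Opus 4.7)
The plan is to reduce the computation of $d_p(\bx,\cH_{\bw,b})$ to a constrained optimization problem and then bracket it using Hölder's inequality on one side and an explicit minimizer on the other. By definition, $d_p(\bx,\cH_{\bw,b}) = \inf \{\|\bx-\bz\|_p : \bw^\top \bz + b \leq 0\}$. The first case of the lemma is immediate: if $\bw^\top\bx + b \leq 0$, then $\bx$ itself lies in $\cH_{\bw,b}$, so the infimum equals $0$.

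For the nontrivial case $c := \bw^\top\bx + b > 0$, I would change variables to $\bu = \bx - \bz$, rewriting the problem as $\inf\{\|\bu\|_p : \bw^\top \bu \geq c\}$, since $\bw^\top\bz + b \leq 0$ is equivalent to $\bw^\top \bu \geq c$. Hölder's inequality with conjugate exponents $1/p + 1/q = 1$ then gives the lower bound: for every feasible $\bu$,
\begin{equation*}
    c \;\leq\; \bw^\top\bu \;\leq\; \|\bw\|_q\,\|\bu\|_p,
\end{equation*}
so $\|\bu\|_p \geq c/\|\bw\|_q$, hence $d_p(\bx,\cH_{\bw,b}) \geq c/\|\bw\|_q = (\bw^\top\bx + b)/\|\bw\|_q$.

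For the matching upper bound I would exhibit an explicit minimizer that simultaneously attains equality in Hölder and saturates the constraint $\bw^\top\bu = c$. For $1<p<\infty$ the natural choice is $u_i^\star = (c/\|\bw\|_q^{\,q})\,\sgn(w_i)\,|w_i|^{q-1}$; a direct calculation shows $\bw^\top\bu^\star = c$ and $\|\bu^\star\|_p = c/\|\bw\|_q$, and then $\bz^\star = \bx - \bu^\star$ satisfies $\bw^\top\bz^\star + b = 0$, so $\bz^\star\in\cH_{\bw,b}$ attains the infimum. I would then handle the two boundary cases separately, since the equality conditions in Hölder degenerate: for $p=1$ (so $q=\infty$), pick any index $j$ with $|w_j| = \|\bw\|_\infty$ and take $\bu^\star = (c/w_j)\,\eb_j$; for $p=\infty$ (so $q=1$), take $u_i^\star = (c/\|\bw\|_1)\,\sgn(w_i)$. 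In each case one verifies $\bw^\top\bu^\star = c$ and $\|\bu^\star\|_p = c/\|\bw\|_q$.

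There is no real obstacle here beyond bookkeeping; the only mildly delicate point is the treatment of the endpoint exponents $p\in\{1,\infty\}$, where the generic construction $|w_i|^{q-1}\sgn(w_i)$ is not well-defined and must be replaced by the coordinate-supported or sign-vector constructions above. Since the proof is symmetric in how it produces both bounds, combining them yields $d_p(\bx,\cH_{\bw,b}) = (\bw^\top\bx + b)/\|\bw\|_q$ in the nontrivial case, as claimed.
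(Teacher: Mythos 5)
Your proof follows the same route as the paper: reduce to a constrained $\ell_p$-minimization problem, use H\"older's inequality for the lower bound, and exhibit an explicit feasible point attaining it for the upper bound. Your minimizer $u_i^\star = (c/\|\bw\|_q^q)\,\sgn(w_i)|w_i|^{q-1}$ is algebraically identical to the paper's $\hat\bz$ (which writes $x_j - \tfrac{c}{\|\bw\|_q}(w_j^q/\|\bw\|_q^q)^{1/p}$, simplifying to the same thing via $q/p = q-1$), with the minor advantage that you are explicit about signs and about the degenerate endpoint exponents $p\in\{1,\infty\}$, whereas the paper only singles out $p=\infty$.
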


Lemma \ref{lem:lp dist to half space} implies that the $\epsilon$-expansion of any half space with respect to the $\ell_p$-norm is still a half space.
Since the VC-dimensions of both the set of half spaces and its expansion are bounded, we can thus apply the generalization theorem in \citet{mahloujifar2019empirically}, which yields the following theorem, proved in Appendix~\ref{proof:efficiency}, that characterizes the generalization of concentration with respect to the collection of half spaces.

\begin{theorem}[Generalization of Concentration of Half Spaces]
\label{thm:generalization}
Consider the metric probability space, $(\cX,\mu, \|\cdot\|_p)$, where $\cX\subseteq\RR^n$ and $p\geq 1$. Let $\{\bx_i\}_{i\in[m]}$ be a set of $m$ instances sampled from $\mu$, and let $\hat\mu_m$ be the corresponding empirical measure. Define the concentration functions regarding the collection of half spaces $\cH\cS(n)$ with respect to $\mu$ as:
\begin{align*}
    h_{\mu}^{(\ell_p)}\big(\alpha,\epsilon, \cH\cS(n)\big) &= \inf_{\cE\in \cH\cS(n)}\big\{\mu\big(\cE_\epsilon^{(\ell_p)}\big)\colon \mu(\cE)\geq \alpha\big\},
\end{align*}
and let $h_{\hat\mu_m}^{(\ell_p)}(\alpha,\epsilon, \cH\cS(n))$ be its empirical counterpart with respect to $\hat\mu_m$.
For any $\delta\in(0,1)$, there exists constants $c_0$ and $c_1$ such that with probability at least $1-c_0 \cdot e^{-n\log n}$,
$$
    h_{\hat\mu_m}^{(\ell_p)}\big(\alpha-\delta,\epsilon, \cH\cS(n)\big) - \delta \leq  h_{\mu}^{(\ell_p)}\big(\alpha,\epsilon, \cH\cS(n)\big) \leq h_{\hat\mu_m}^{(\ell_p)}\big(\alpha+\delta,\epsilon, \cH\cS(n)\big) + \delta
$$
holds, provided that the sample size $m \geq c_1\cdot n\log n/\delta^2$.
\end{theorem}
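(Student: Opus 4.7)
The plan is to apply a standard VC-based uniform-convergence argument to the class of half spaces and, crucially, to the class of their $\epsilon$-expansions, and then convert that uniform control of measures into the two-sided bound on the concentration function by transferring feasible solutions between the population and empirical problems. This is precisely the template of the generalization theorem of \citet{mahloujifar2019empirically}; my task is to verify that $\cH\cS(n)$ satisfies its hypotheses with the claimed sample complexity.

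The first step is a structural observation following immediately from Lemma~\ref{lem:lp dist to half space}: the $\epsilon$-expansion of $\cH_{\bw,b}$ under $\ell_p$-distance is again a half space, namely $\cH_{\bw,\, b - \epsilon\|\bw\|_q}$ with $1/p+1/q = 1$. Since this expansion keeps the unit-norm weight vector $\bw$, it lies in $\cH\cS(n)$ itself. Therefore, both the class of candidate sets and the class of their $\epsilon$-expansions are contained in $\cH\cS(n)$, a class of VC-dimension at most $n+1$. The classical VC uniform-convergence bound then yields that with probability at least $1 - c_0\, e^{-n\log n}$, every $\cA \in \cH\cS(n)$ satisfies $|\mu(\cA) - \hat\mu_m(\cA)| \leq \delta$, provided $m \geq c_1\, n\log n/\delta^2$ for appropriate absolute constants $c_0, c_1$; the prescribed rates in $n$ and $\delta$ come from routing the VC deviation inequality through the polynomial shatter bound and tuning the failure probability to $c_0 e^{-n\log n}$.

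Given this uniform deviation bound, the two inequalities follow by a feasibility-transfer argument. For the right-hand inequality, let $\hat{\cE}$ nearly attain the empirical infimum at threshold $\alpha+\delta$. Uniform convergence on $\cH\cS(n)$ gives $\mu(\hat{\cE}) \geq \hat\mu_m(\hat{\cE}) - \delta \geq \alpha$, so $\hat{\cE}$ is feasible for the population problem at threshold $\alpha$, while simultaneously $\mu\bigl(\hat{\cE}_\epsilon^{(\ell_p)}\bigr) \leq \hat\mu_m\bigl(\hat{\cE}_\epsilon^{(\ell_p)}\bigr) + \delta$, because the expansion is itself a half space and hence belongs to the uniformly-controlled class. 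The left-hand inequality is proved symmetrically by transferring a near-minimizer of the population problem at threshold $\alpha$ into a feasible point of the empirical problem at threshold $\alpha-\delta$.

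The main obstacle is not the concentration argument itself, which is routine once the correct class is identified, but rather the preliminary structural step that collapses the class of $\epsilon$-expansions back into $\cH\cS(n)$; without that collapse, one would have to bound the VC-dimension of a strictly enlarged family of sets and could pay a much worse sample complexity. Once Lemma~\ref{lem:lp dist to half space} is in hand, the rest is a direct specialization of the generalization template of \citet{mahloujifar2019empirically} to the half-space class.
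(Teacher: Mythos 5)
Your proposal is correct and follows essentially the same route as the paper: both rest on the observation (via Lemma~\ref{lem:lp dist to half space}) that the $\epsilon$-expansion of a half space is again a half space, so $\cH\cS(n)$ and its expansion class have VC-dimension $n+1$, and both then invoke the standard VC uniform-convergence bound and the feasibility-transfer argument of Theorem~3.3 in \citet{mahloujifar2019empirically}. The only difference is that the paper cites that theorem as a black box, whereas you unpack the two-sided feasibility-transfer step explicitly; the substance is identical.
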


\begin{remark}
Theorem \ref{thm:generalization} suggests that for the concentration of measure problem with respect to half spaces, in order to achieve $\delta$ estimation error with high probability, it requires $\Omega(n\log(n)/\delta^2)$ number of samples. Compared with \citet{mahloujifar2019empirically}, our method using half spaces requires fewer samples in theory to achieve the same estimation error.\footnote{The proposed estimators for $\ell_\infty$ and $\ell_2$ in \citet{mahloujifar2019empirically} require $\Omega(nT\log(n)\log(T)/\delta^2)$ samples to achieve $\delta$ approximation, where $T$ is a predefined number of hyperrectangles or balls.} For standard Gaussian inputs, the empirical concentration with respect to half spaces is guaranteed to converge to the actual concentration as in \eqref{eq:actual concentration}, i.e., $\lim_{m\rightarrow\infty} h_{\hat\mu_m}^{(\ell_p)}\big(\alpha,\epsilon, \cH\cS(n)\big) = h_{\mu}^{(\ell_p)}\big(\alpha,\epsilon\big)$; whereas for distributions that are not Gaussian, there might exist a gap. However, this gap of empirical and actual concentration is shown to be uniformly small across various data distributions, as will be discussed in Section \ref{sec:exp convergence}.
\end{remark}

Based on Lemma \ref{lem:lp dist to half space},  estimating the empirical concentration using half spaces as defined in \eqref{eq:empirical concentration hs} is equivalent to solving the following constrained optimization problem:
\begin{equation}
\label{eq:concentration equiv form}
\begin{aligned}
    \minimize_{\bw\in\RR^n,b\in\RR}\: \quad&\sum_{i\in[m]}\ind\{\bw^\top\bx_i+b\leq \epsilon\|\bw\|_q\} \\
    \text{ subject to }\quad & \frac{1}{m}\sum_{i\in [m]}\ind\{\bw^\top\bx_i+b\leq 0\}\geq \alpha \:\text{ and }\: \|\bw\|_2=1.
\end{aligned}
\end{equation}
The optimal solution to \eqref{eq:concentration equiv form} would be a half space $\cH_{\bw,b}$ that satisfies the following two properties: (1) approximately $\alpha$-fraction of data is covered by $\cH_{\bw,b}$, and (2) most of the remaining data points are at least $\epsilon$-away from $\cH_{\bw,b}$ under $\ell_p$-norm distance metric.

Note that we can always set $b$ to be the $\alpha$-quantile of the projections $\{-\bw^\top\bx_i: i\in[m]\}$ to satisfy the first property. In addition, to satisfy the second condition, inspired by the special case optimality results in Remark \ref{rem:gii ext}, we propose to search for a weight vector $\bw$ such that both the $\ell_q$-norm of $\bw$ is small and the variation of the given sample set along the direction of $\bw$ is large. These searching criteria guarantee that the given dataset $\{\bx_i\}_{i\in[m]}$, when projected onto $\bw$ then normalized by $\|\bw\|_q$, will have a large variance, which implies the second property. 

We propose a heuristic algorithm to search for the desirable half space according to the aforementioned criteria (for the pseudocode and a detailed analysis of the algorithm, see Appendix \ref{sec:alg}). It first conducts a principal component analysis with respect to the given empirical dataset, then iterates through all the principal components raised to an arbitrary power with normalization as candidate choices of the weight vector $\bw$. Finally, based on the optimal choice of $b$, the algorithm outputs the best $\cH_{\bw,b}$ that achieves the smallest empirical measure with respect to the $\epsilon$-expansion of $\cH_{\bw,b}$. 
As we see in the experiments in the next section, 
this algorithm is able to find near-optimal solutions to the concentration problem for various datasets. 


\section{Error Analysis}


The goal of measuring concentration is to provide an empirical estimate of concentration of measure that minimizes the overall approximation error. Here, we describe the error components based on the general empirical framework for measuring concentration, and then discuss its implications on how to choose the collection of subsets $\cG$ for the empirical concentration problem \eqref{eq:empirical concentration}. 

\shortsection{Error Decomposition} Let $(\cX,\mu,\Delta)$ be the underlying input metric probability space and $\cG$ be the selected collection of subsets for the empirical concentration problem. Suppose an algorithm that aims to solve the empirical concentration problem \eqref{eq:empirical concentration} returns $\cE$ as a solution. For any $\alpha\in(0,1)$ and $\epsilon\geq 0$, the approximation error between the empirical estimate of concentration and the actual concentration can be decomposed into three error terms:
\begin{align}
\label{eq:error decomposition}
\nonumber\underbrace{h_{\mu}^{(\Delta)}(\alpha,\epsilon) - \hat\mu_m(\mathcal{E}_\epsilon^{(\Delta)})}_{\text{approximation error}} = \underbrace{h_{\mu}^{(\Delta)}(\alpha,\epsilon) - h_{\mu}^{(\Delta)}(\alpha,\epsilon, \mathcal{G})}_{\text{modeling error}} &+ \underbrace{h_{\mu}^{(\Delta)}(\alpha,\epsilon, \mathcal{G}) - h_{\hat\mu_m}^{(\Delta)}(\alpha,\epsilon, \mathcal{G})}_{\text{finite sample estimation error}} \\ &\qquad+ \underbrace{h_{\hat\mu_m}^{(\Delta)}(\alpha,\epsilon, \mathcal{G}) - \hat\mu_m(\mathcal{E}_\epsilon^{(\Delta)})}_{\text{optimization error}}.
\end{align}
The \emph{modeling error} denotes the difference between the actual concentration function and the concentration function with respect to the selected collection of subsets $\cG$; the \emph{finite sample estimation error} represents the generalization gap between the empirical concentration function and its limit; and the \emph{optimization error} captures how well the algorithm approximates the empirical concentration problem. Such an error decomposition applies to both the empirical method proposed in this work as well as \citet{mahloujifar2019empirically}'s, despite the different choices of $\cG$.

%
%
%
The complexity of $\cG$ and the complexity of its $\epsilon$-expansion $\cG_\epsilon=\{\cE_\epsilon: \cE\in\cG\}$ control the finite sample estimation error. So, $\cG$ should be selected such that the empirical concentration function $h_{\hat\mu_m}^{(\Delta)}(\alpha,\epsilon, \mathcal{G})$ generalizes. If either $\cG$ or $\cG_\epsilon$ is too complex (e.g., it has unbounded VC-dimension), it will be difficult to control the generalization of the empirical concentration function (see Remark 3.4 in \citet{mahloujifar2019empirically} for a detailed discussion).  

There exist tradeoffs among the three error terms in \eqref{eq:error decomposition}, and it is unlikely that there is a uniformly good choice for $\cG$ that minimizes all these error terms. In particular, increasing the complexity of $\cG$ typically reduces the modeling error, since the feasible set of the concentration function with respect to $\cG$ becomes larger. However, according to the generalization of concentration, this will also increase the finite sample estimation error. Therefore, we should consider the effect of all these error terms when choosing $\cG$, including the hardness of the optimization problem with respect to the empirical concentration. It is favorable that the distance to any set in $\cG$ has a closed-form solution, which enables exactly computing the empirical measure of the $\epsilon$-expansion of any set in $\cG$.
In addition, it will be easier to control the optimization error (i.e., develop an algorithm that produces tight estimates), if the empirical concentration problem is simpler. For instance, solving the empirical concentration problem with respect to the set of half spaces should be easier than solving it based on the union of hyperrectangles or balls, since there are more hyperparameters to optimize for the latter problem. Such simplicity further contributes to a tighter empirical estimate produced by our algorithm for the underlying concentration of measure problem.

Depending on the underlying distance metric \citet{mahloujifar2019empirically} set $\cG$ as a collection of the union of complement of hyperrectangles or the union of balls, whereas we choose $\cG$ as the set of half spaces for any $\ell_p$-norm distance. In this work, we show that the set of half spaces is a superior choice of $\cG$ for measuring the concentration of typical image benchmarks. Other choices of $\cG$ may be preferred for different settings, but the same error decomposition and criteria will apply.





\section{Experiments}
\label{sec:experiments}
In this section, we evaluate our empirical method for estimating concentration under $\ell_\infty$-norm distance and comparing its performance to that of the method proposed by \citet{mahloujifar2019empirically}. We first demonstrate that the estimate produced by our algorithm is very close to the actual concentration for a spherical Gaussian distribution, and that our method is able to find much tighter bounds on the best possible adversarial risk for several image benchmarks. We then compare the convergence rates, and show that our method converges with substantially less data. 
Note that while we only provide results for the most widely-used $\ell_\infty$-norm perturbation metric adopted in the existing adversarial examples literature, our algorithm and experiments can be applied to any other $\ell_p$-norm.

\subsection{Estimation Accuracy} 
\label{sec:exp tight estimates}

First, we evaluate the performance of our algorithm under $\ell_\infty$-norm distance metric on a generated synthetic dataset consisting of 30,000 samples from $\cN (\zero, \Ib_{784})$. Since the proposed method follows from the analytical results of concentration of multivariate Gaussian distributions, we expect results produced by our empirical method to closely approach the analytical concentration on this simulated Gaussian dataset. We initially consider the case where $\epsilon = 1.0$ and $\alpha = 0.5$ for the actual concentration problem, requiring that the feasible set contains at least half of the data samples, and the adversary can perturb each entry by precisely the standard deviation of the underlying distribution.
Our algorithm is able to produce a half space whose $\epsilon$-expansion has mean empirical measure $84.18\%$ over $5$ repeated trials. According to Theorem \ref{thm:gii ext} and Remark \ref{rem:gii ext}, the optimal value of the considered concentration problem is $84.13\%$. This implies that our method performs very well when the underlying distribution is Gaussian, while in stark contrast, the method by \citet{mahloujifar2019empirically} is not able to find a region whose expansion has measure less than $1$ on the same simulated set. 
In addition, we consider another setting for this dataset where $\epsilon=1.0$ is set the same and $\alpha=0.05$ is set to be much smaller. Similarly, we observe that our method significantly outperforms \citet{mahloujifar2019empirically}'s in terms of the estimation accuracy (see Table \ref{table:main exp results} for the detailed comparison results).

Next, we evaluate our method on several image benchmarks. We set the values of $\alpha$ and $\epsilon$ to be the same as in \citet{mahloujifar2019empirically} for the $\ell_\infty$ case. For example, we use $\alpha = 0.01,\; \epsilon \in \{0.1, 0.2, 0.3, 0.4\}$ for MNIST, and $\alpha = 0.05,\; \epsilon \in \{2/255, 4/255, 8/255, 16/255\}$ for CIFAR-10. These $\alpha$ values were selected to roughly represent the standard error of the state-of-the-art classifiers.

Table~\ref{table:main exp results} demonstrates the risk and adversarial risk with respect to the best produced subsets using both methods, computed on a separate test dataset. In our context of measuring concentration, risk refers to the empirical measure of the produced subset, while adversarial risk corresponds to the empirical measure of its $\epsilon$-expansion. 
We use a $50/50$ train-test split over the whole dataset to perform our evaluation, and determine the best exponent of each principal component based on a brute-force search. Though our method is deterministic for a given pair of training and testing sets, we account for the variance of our method over different train-test splits by repeating our experiments $5$ times and reporting the mean and standard deviation of the results for each $(\alpha, \epsilon)$. It is worth noting that the randomness of the previous method is derived not only from the selection of the training and test sets, but also from the inherent randomness of the employed k-means algorithm.

\begin{table*}[tb]
\caption{Comparisons between our method of estimating concentration with $\ell_\infty$-norm distance and the method proposed by \cite{mahloujifar2019empirically} for different settings. 
For $\cN (\zero, \Ib_{784})$ with $\alpha = 0.5$ and $\epsilon = 1.0$, the previous method is unable to produce nontrivial estimate. 
Results for the previous method are taken directly from the original paper (except for the Gaussian results).}\label{table:main exp results}
\small{
\begin{center}
	\begin{tabular}{lccccccc}
		\toprule
		\multirow{2}{*}[-2pt]{\textbf{Dataset}} & 
		\multirow{2}{*}[-2pt]{$\bm\alpha$} &
		\multirow{2}{*}[-2pt]{$\bm\epsilon$} &
		\multicolumn{2}{c}{\textbf{Test Risk (\%)}} &
		\multicolumn{2}{c}{\textbf{Test Adv. Risk (\%)}} 
		\\ 
		\cmidrule(l){4-5}
		\cmidrule(l){6-7}
		& & & Prev. Method & Our Method & Prev. Method & Our Method \\
	    \midrule
		\multirow{2}{*}{\small $\cN (\zero, \Ib_{784})$}
		& $0.05$ & $1.0$ & $6.21 \pm 0.44$ & $5.20 \pm 0.16$ & $89.75 \pm 0.80$ & $26.37 \pm 0.17$ \\
		& $0.5$ & $1.0$ & - & $49.98 \pm 0.25$ & - & $84.18 \pm 0.11$ \\
		\midrule
		\multirow{4}{*}[-2pt]{\small MNIST} & 	
		\multirow{4}{*}[-2pt]{$0.01$} 
		  & $0.1$ & $1.23 \pm 0.12$ & $1.22 \pm 0.05$ & $3.64 \pm 0.30$ & $1.35 \pm 0.06$ \\
		& & $0.2$ & $1.11 \pm 0.10$ & $1.24 \pm 0.05$ & $5.89 \pm 0.44$ & $1.52 \pm 0.06$ \\
	    & & $0.3$ & $1.15 \pm 0.13$ & $1.25 \pm 0.04$ & $7.24 \pm 0.38$ & $1.75 \pm 0.05$ \\
	    & & $0.4$ & $1.21 \pm 0.09$ & $1.27 \pm 0.05$ & $9.92 \pm 0.60$ & $1.98 \pm 0.08$ \\
	    \midrule
		\multirow{4}{*}[-2pt]{\small CIFAR-10} & 
		\multirow{4}{*}[-2pt]{$0.05$} 
		  & $2/255$ & $5.72 \pm 0.25$ & $5.14 \pm 0.13$ & $8.13 \pm 0.26$ & $5.28 \pm 0.12$ \\
		& & $4/255$ & $6.05 \pm 0.40$ & $5.22 \pm 0.20$ & $13.66 \pm 0.33$ & $5.68 \pm 0.21$ \\
	    & & $8/255$ & $5.94 \pm 0.34$ & $5.22 \pm 0.16$ & $18.13 \pm 0.30$ & $6.28 \pm 0.13$ \\
	    & & $16/255$ & $5.28 \pm 0.23$ & $5.19 \pm 0.08$  & $28.83 \pm 0.46$ & $7.34 \pm 0.15$ \\
	    	    \midrule
		\multirow{3}{*}[0pt]{\small Fashion-MNIST} & 	
		\multirow{3}{*}[0pt]{$0.05$} 
		  & $0.1$ & $5.92 \pm 0.85$ & $5.33 \pm 0.14$ & $11.56 \pm 0.84$ & $6.04 \pm 0.13$ \\
		& & $0.2$ & $6.00 \pm 1.02$ & $5.34 \pm 0.14$ & $14.82 \pm 0.71$ & $6.82 \pm 0.19$ \\
	    & & $0.3$ & $6.13 \pm 0.93$ & $5.24 \pm 0.10$ & $17.46 \pm 0.53$ & $8.01 \pm 0.19$ \\
	    \midrule
		\multirow{1}{*}[0pt]{\small SVHN} & 
		\multirow{1}{*}[0pt]{$0.05$} 
		  & $0.01$ & $8.83 \pm 0.30$ & $5.23 \pm 0.09$ & $10.17 \pm 0.29$ & $5.56 \pm 0.08$ \\
	    \bottomrule
	\end{tabular}
\end{center}
}
\end{table*}

We observe from Table \ref{table:main exp results} that in every case, the estimated adversarial risk is significantly lower for our method than for the one found by \citet{mahloujifar2019empirically}'s. Since both methods restrict the search space to some special collection of subsets, these estimates can be viewed as valid empirical upper bounds of the actual concentration as defined in \eqref{eq:actual concentration}. Therefore, the fact that our results are significantly lower indicates that our algorithm is able to produce estimates that are much closer to the optimum of the targeted problem. 
In addition, when translated to adversarial robustness, these tighter estimates prove the existence of a rather robust classifier\footnote{Based on the ground-truth $f^*$ and the returned set $\cE$ of our algorithm, this classifier can be simply constructed by setting $f(\bx)=f^*(\bx)$ for $\bx\notin\cE$ and $f(\bx)\neq f^*(\bx)$ for $\bx\in\cE$. 
Without knowing the ground-truth, we note that such classifier may or may not be learnable. The learnability of such $f$ is beyond the scope of this paper.} that has risk at least $\alpha$, which further suggests that the underlying intrinsic robustness limit of each of these image benchmarks is actually much higher than previously thought.

For example, the best classifier produced by \citet{mahloujifar2019empirically} has $18.1\%$ adversarial risk under $\ell_\infty$-perturbations bounded by $\epsilon = 8/255$ on CIFAR-10. However, our results demonstrate that the adversarial risk of the best possible robust classifier can be as low as $6.3\%$ given the same risk constraint, indicating the underlying intrinsic robustness to be above $93.7\%$. As the intrinsic robustness limits are shown to be very close to the trivial upper bound $1-\alpha$ across all the settings, our results reveal that the concentration of measure phenomenon is not an important factor that causes the adversarial vulnerability of existing classifiers on these image benchmarks.

\subsection{Convergence} \label{sec:exp convergence}

Figure \ref{fig:convergence plots} shows the convergence rate of our method compared with that of the method proposed by \citet{mahloujifar2019empirically} under $\ell_\infty$-distance for Gaussian data from $\cN (\zero, \Ib_{784})$ $(\alpha = 0.05,\; \epsilon = 1)$, as well as for MNIST
$(\alpha = 0.01,\; \epsilon = 0.1)$ and CIFAR-10 $(\alpha = 0.05,\; \epsilon = 2/255)$. We observe similar trends for other settings, thus we defer these results to Appendix \ref{sec:additional results}. For each graph, the horizontal $x$-axis represents the size of the dataset used to train the estimator, and the vertical $y$-axis shows the concentration bounds estimated for a separate test set, which is of size $30,000$ for each case.
We generate the means and standard deviations for these convergence curves by repeating both methods $5$ times for different randomly-selected training and test tests.
For 
the proposed algorithm in \citet{mahloujifar2019empirically}, we tune the number of the hyperrectangles $T$ for the optimal performance based on the empirically-observed adversarial risk.



\begin{figure*}[tb]
  \label{fig:convergence plots}
  \centering
    \subfigure[$\cN(\zero, \Ib_{784})$]{
    \label{fig:gaussianconv}
    \includegraphics[width=0.315\textwidth]{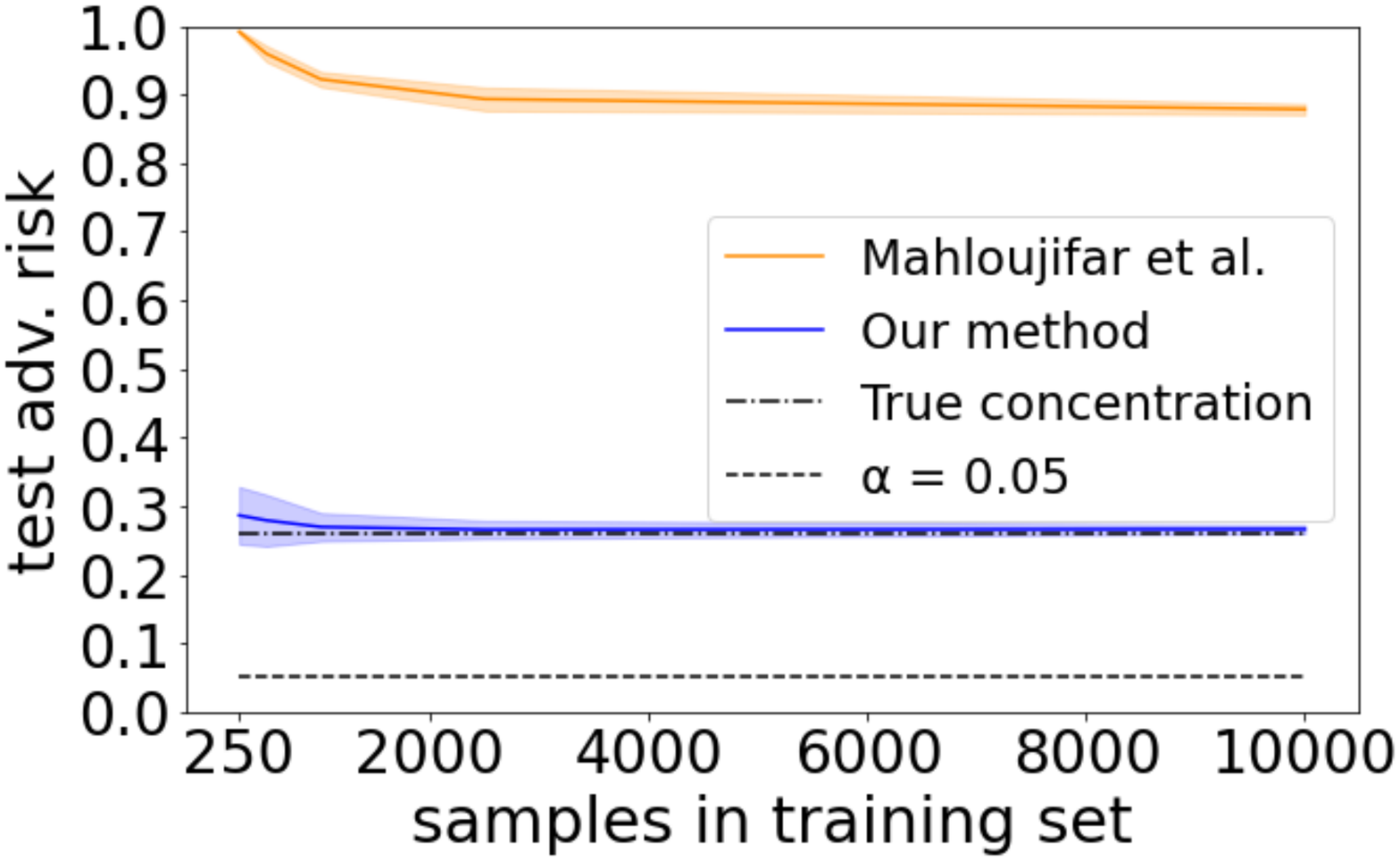}}
    \hspace{0.01in}
    \subfigure[MNIST]{
    \label{fig:mnist}
    \includegraphics[width=0.315\textwidth]{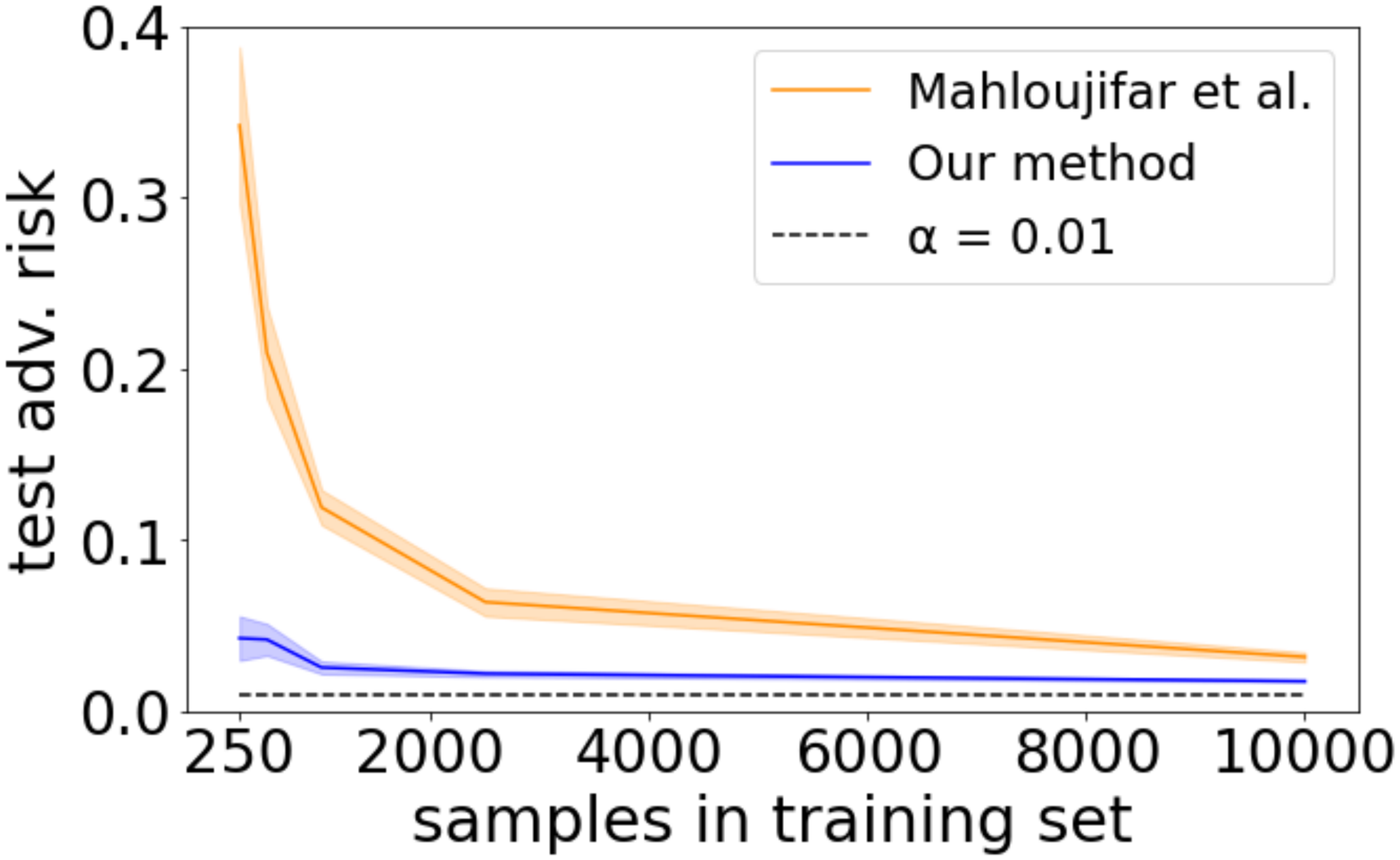}}
    \hspace{0.01in}
    \subfigure[CIFAR-10]{
    \label{fig:cifar}
    \includegraphics[width=0.315\textwidth]{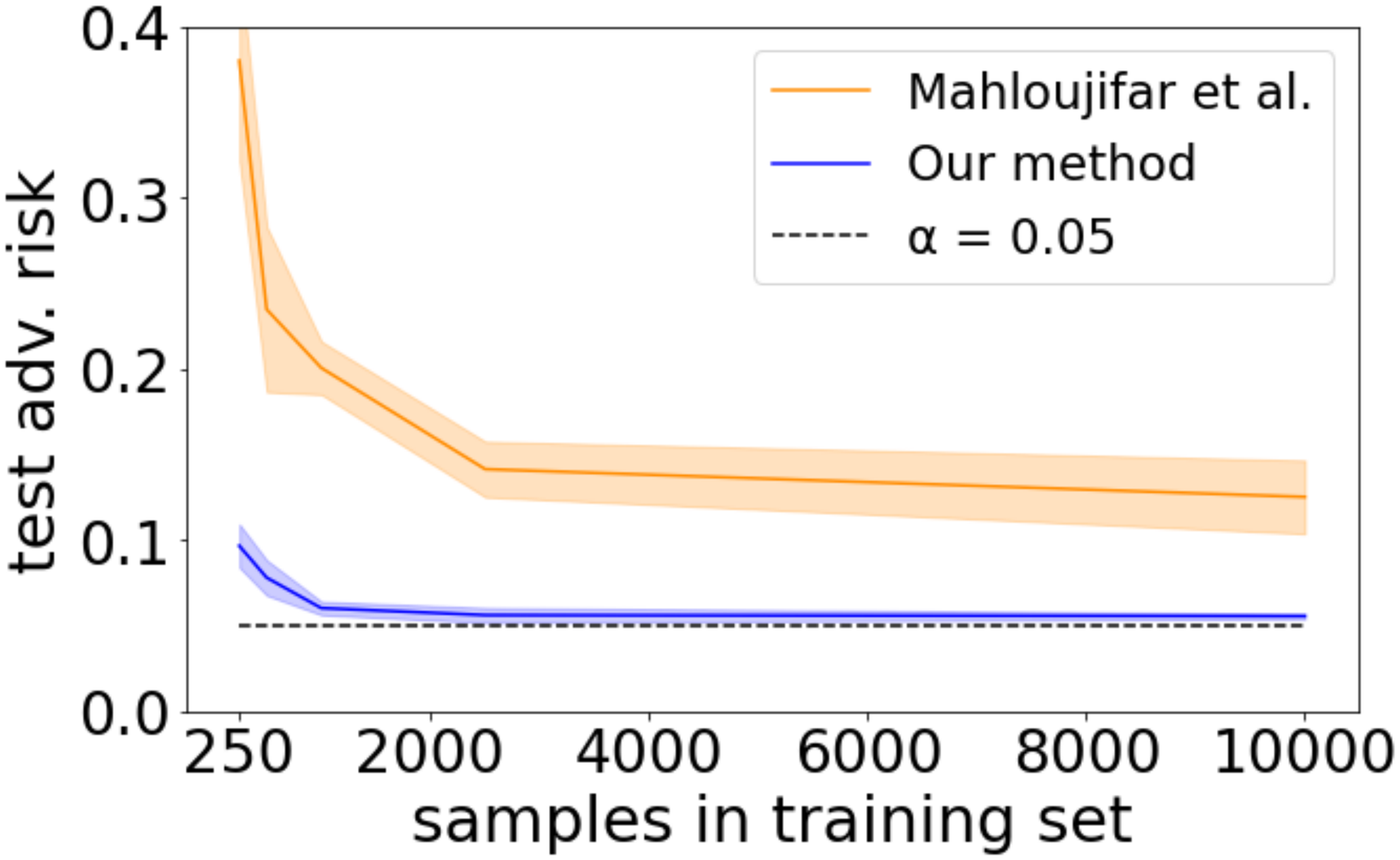}}
  \caption{The convergence curves of the best possible adversarial risk estimated using our method and the method proposed by \citet{mahloujifar2019empirically} as the number of training samples grows.}
\end{figure*}

For the simulated Gaussian datasets, we include a horizontal line at $y=0.2595$ to represent the true concentration of the underlying distribution, derived from Theorem \ref{thm:gii ext} and Remark \ref{rem:gii ext}. This allows us to more accurately assess the convergence of our method, as it is the only case for which we know the optimal value that our empirical estimates should be converging to. We see that our estimates approach this line very quickly, coming within $0.01$ of the true value given only about 1,000 samples.

While we do not have such a theoretical limit for other datasets, the risk threshold $\alpha$ can be viewed as a lower bound of the actual concentration, which is useful in visually assessing the convergence performance of our method. We can see that for both MNIST and CIFAR-10, our estimates get very close to the horizontal line at $y=\alpha$ given several thousand training samples.
Since the actual concentration must be no less than $\alpha$ and our estimated upper bound is approaching $\alpha$ from the above, we immediate infer that the actual concentration of these data distributions with $\ell_\infty$-norm distance should be a value sightly greater than $\alpha$.
These results not only demonstrate the superiority of our method over the method of \citet{mahloujifar2019empirically} in estimating concentration, but also show that concentration of measure is not the reason for our inability to find adversarially robust models for these image benchmarks.

\section{Conclusion}

Our results advance understanding of the intrinsic limits of adversarial robustness, strengthening the conclusion from \citet{mahloujifar2019empirically} which asserted that concentration of measure is not the sole cause of the vulnerability of existing classifiers to adversarial attacks. We generalized the standard Gaussian Isoperimetric Inequality, and then
leveraged theoretical insights from that to construct an efficient method for empirically estimating the concentration of arbitrary data distribution using samples.  Our method is able to generalize to any $\ell_p$-norm distance metric, and surpasses previous approaches in both estimation accuracy and data-efficiency. 






\section*{Availability}

An implementation of our method, and code for reproducing our experiments, is available under an open source license  from \url{https://github.com/jackbprescott/EMC\_HalfSpaces}.

\section*{Acknowledgements}

This work was partially funded by an award from the National Science Foundation SaTC program (Center for Trustworthy Machine Learning, \#1804603). We thank Saeed Mauloujifar and Mohammad Mahmoody for valuable comments and discussions.

\bibliography{adv_exps}
\bibliographystyle{iclr2021_conference}

\appendix

\section{Proofs of Main Results in Section \ref{sec:gii}}
\label{sec:proof}

In this section, we provide the detailed proofs of our main theoretical results presented in Section \ref{sec:gii}.

\subsection{Proof of Theorem \ref{thm:gii ext}}
\label{proof:thm gii ext}

Before proving Theorem \ref{thm:gii ext}, we first lay out the following lemma regarding the monotonicity of $\ell_p$-norms. For a rigorous proof of this lemma, see \citet{raissouli2010various}. 

\begin{lemma}[Monotonicity of $\ell_p$]
\label{lem:monotone lp}
For any vector $\bx\in\RR^n$, the mapping $p\rightarrow \|\bx\|_p$ is monotonically decreasing for any $p\geq 1$ (including $p=\infty$). That said, $\|\bx\|_p\leq\|\bx\|_q$ holds for any $p\geq q\geq 1$.
\end{lemma}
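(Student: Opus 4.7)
The plan is to prove the inequality $\|\bx\|_p \le \|\bx\|_q$ for $p \ge q \ge 1$ by a short elementary argument that exploits positive homogeneity of norms together with the pointwise inequality $t^p \le t^q$ valid for $t \in [0,1]$ when $p \ge q$. This reduces the claim to a single-sum comparison, avoiding heavier tools such as H\"older's inequality or the power mean inequality.

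First I would dispose of the trivial case $\bx = \zero$, where both norms vanish. For $\bx \neq \zero$, I would use positive homogeneity to rescale: setting $\by = \bx / \|\bx\|_q$ we have $\|\by\|_q = 1$, so it suffices to prove $\|\by\|_p \le 1$. From $\sum_{j=1}^n |y_j|^q = 1$ I immediately get $|y_j| \le 1$ for every coordinate $j \in [n]$. Then for finite $p \ge q$, since $|y_j| \in [0,1]$, the map $t \mapsto t^{p-q}$ is bounded above by $1$ on $[0,1]$, giving the pointwise bound $|y_j|^p = |y_j|^{p-q}\cdot |y_j|^q \le |y_j|^q$. Summing over $j$ yields $\sum_j |y_j|^p \le \sum_j |y_j|^q = 1$, and taking $p$-th roots gives $\|\by\|_p \le 1$, as desired. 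Rescaling back by $\|\bx\|_q$ completes the case of finite $p$.

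Next I would handle the case $p = \infty$. Here I would directly observe that for any index $j$, $|x_j|^q \le \sum_{k=1}^n |x_k|^q$, so $|x_j| \le (\sum_k |x_k|^q)^{1/q} = \|\bx\|_q$. Taking the maximum over $j$ gives $\|\bx\|_\infty \le \|\bx\|_q$ for any $q \ge 1$, which also recovers the limit as $p \to \infty$ of the finite-$p$ bound. Together the two cases establish that $p \mapsto \|\bx\|_p$ is monotonically decreasing on $[1,\infty]$, proving the lemma.

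No step here is a serious obstacle; the only subtlety is remembering to reduce to the unit-$\ell_q$-ball by homogeneity so that the pointwise comparison $|y_j|^p \le |y_j|^q$ is available. Since this lemma is a standard textbook fact, I would present it concisely and, as the paper already does, point to \citet{raissouli2010various} for the fully rigorous write-up.
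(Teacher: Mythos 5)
Your proof is correct and complete. Note that the paper itself does not prove Lemma~\ref{lem:monotone lp}; it simply records the statement and cites \citet{raissouli2010various} for a rigorous treatment. Your write-up therefore supplies a self-contained elementary argument where the paper gives none: you reduce to the unit $\ell_q$-sphere by homogeneity, apply the pointwise bound $|y_j|^p \le |y_j|^q$ on $[0,1]$ for $p\ge q$, and handle $p=\infty$ separately by the coordinate-wise observation $|x_j|\le\|\bx\|_q$. This is the standard textbook proof and it is sound; the only caveat is that, since the paper's ``approach'' is a citation, there is nothing substantive to compare against, but your proof is an appropriate replacement (or supplement) to that citation.
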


Now, we are ready to prove Theorem \ref{thm:gii ext}. In particular, we first include a high-level proof sketch, then present the complete proof after.

\begin{proof}[Proof Sketch of Theorem \ref{thm:gii ext}]
We start with the spherical Gaussian distribution where $\nu = \gamma_n$. More specifically, we are going to prove that for any $\cE\subseteq\RR^n$ and $\eta\geq0$,
\begin{align}
\label{eq:gii lp ext main}
    \gamma_n\big(\cE_\eta^{(\ell_p)}\big) \geq \Phi\big(\Phi^{-1}\big(\gamma_n(\cE)\big)+\eta \big) \text{ holds for } p\geq 2.
\end{align}

Note that for any vector $\bx\in\RR^n$, the mapping $p\rightarrow \|\bx\|_p$ is monotonically decreasing for any $p \geq 1$, thus we can show that $\cE_\eta^{(\ell_q)} \subseteq \cE_\eta^{(\ell_p)}$ holds for any $p\geq q\geq 1$. Making use of the standard Gaussian Isoperimetric Inequality (Lemma \ref{lem:gii std}), we then immediately obtain
$$
\gamma_n\big(\cE_\eta^{(\ell_p)}\big) \geq \gamma_n\big(\cE_\eta^{(\ell_2)}\big) \geq \Phi\big(\Phi^{-1}\big(\gamma_n(\cE)\big)+\eta\big), \text{ for any } p\geq 2.
$$

Moreover, to prove the concentration bound for general case where $\nu$ is the probability measure of $\cN(\btheta,\bSigma)$, we build connections with the spherical Gaussian case by constructing a subset $\cA = \{\bSigma^{-1/2}(\bx-\btheta): \bx\in\cE\}$. Based on the affine transformation of Gaussian measure, we then prove:
\begin{align}
\label{eq:transform gen}
    \nu(\cE) = \gamma_n(\cA) \quad \text{and} \quad \nu(\cE_\epsilon^{(\ell_p)}) \geq \gamma_n(\cA_{\eta}^{(\ell_p)}), \quad \text{where } \eta=\epsilon/\|\bSigma^{1/2}\|_p.
\end{align}
Finally, combining \eqref{eq:gii lp ext main} and \eqref{eq:transform gen} completes the proof of Theorem \ref{thm:gii ext}.
\end{proof}

\begin{proof}[Complete Proof of Theorem \ref{thm:gii ext}]

To begin with, we consider the special case where the underlying distribution is standard Gaussian ($\nu = \gamma_n$). Specifically, we are going to prove that for any $\cE\subseteq\RR^n$ and $\eta\geq0$,
\begin{align}
\label{eq:gii lp ext}
    \gamma_n\big(\cE_\eta^{(\ell_p)}\big) \geq \Phi\big(\Phi^{-1}\big(\gamma_n(\cE)\big)+\eta \big) \:\text{ holds for any } p\geq 2.
\end{align}
Let $p\geq q\geq 1$. According to the definition of $\epsilon$-expansion of a subset and Lemma \ref{lem:monotone lp}, we have
\begin{align}
\label{eq:inclusion}
    \nonumber\cE_\eta^{(\ell_q)} &= \big\{ \bx\in\RR^n: \exists\:\bx'\in \cE \text{ s.t. } \|\bx'-\bx\|_q\leq\eta \big\} \\
    & \subseteq \big\{ \bx\in\RR^n: \exists\:\bx'\in \cE \text{ s.t. } \|\bx'-\bx\|_p\leq\eta \big\} = \cE_\eta^{(\ell_p)}
\end{align}
where the inclusion is due to the fact that $\|\bx'-\bx\|_p\leq \|\bx'-\bx\|_q$ holds for any $\bx'$ and $\bx$. Therefore, by setting $q=2$ in \eqref{eq:inclusion}, we further obtain that for any $p\geq 2$,
$$
\gamma_n\big(\cE_\eta^{(\ell_p)}\big) \geq \gamma_n\big(\cE_\eta^{(\ell_2)}\big) \geq \Phi\big(\Phi^{-1}\big(\gamma_n(\cE)\big)+\epsilon\big),
$$
where the second inequality is due to the standard Gaussian Isoperimetric Inequality (Lemma \ref{lem:gii std}). Thus, we have proven \eqref{eq:gii lp ext}.

Now we turn to proving the concentration bound for the general Gaussian case. 
Let $\Ub\bLambda\Ub^\top$ be the eigenvalue decomposition of $\bSigma$, where $\Ub\in\RR^{n\times n}$ is an orthonormal matrix and $\bLambda\in\RR^{n\times n}$ is a diagonal matrix consisting of all the eigenvalues. Since $\bSigma$ is positive definite, the square root of $\bSigma$ can be expressed as $\bSigma^{1/2} = \Ub\bLambda^{1/2}\Ub^\top$. Let $\bSigma^{-1/2} = \Ub\bLambda^{-1/2}\Ub^\top$ be the inverse matrix of $\bSigma^{1/2}$.

Construct a subset $\cA$ in $\RR^n$ such that $\cA = \{\bSigma^{-1/2}(\bx-\btheta): \bx\in\cE\}$. Based on the construction of $\cA$, we can then prove the following results for any $\cE\subseteq\RR^n$ and $\epsilon\geq 0$:
\begin{align}
\label{eq:transform}
    \nu(\cE) = \gamma_n(\cA) \quad \text{and} \quad \nu(\cE_\epsilon^{(\ell_p)}) \geq \gamma_n(\cA_{\eta}^{(\ell_p)}), \quad \text{where } \eta=\epsilon/\|\bSigma^{1/2}\|_p.
\end{align}
First, we prove the equality $\nu(\cE) = \gamma_n(\cA)$. Since $\nu$ is the probability measure of $\cN(\btheta, \bSigma)$, we have
\begin{align}
\label{eq:transform eq}
    \nu(\cE) = \Pr_{\bx\sim\nu} [\bx\in\cE] = \Pr_{\bx\sim\nu} [\bSigma^{-1/2}(\bx-\btheta)\in\cA] = \Pr_{\bu\sim\gamma_n} [\bu\in\cA] = \gamma_n(\cA),
\end{align}
where the third inequality is due to the affine transformation of Gaussian random variables. 

Next, we prove the remaining inequality in \eqref{eq:transform}. By definition, for any $\bu'\in \cA_{\eta}^{(\ell_p)}$, there exists $\bu\in\cA$ such that $\|\bu'-\bu\|_p\leq\eta$. Let $\bx' = \btheta+\bSigma^{1/2}\bu'$ and $\bx = \btheta+\bSigma^{1/2}\bu$, then we have
\begin{align}
\label{eq:lp transform}
    \big\|\bx' - \bx\big\|_p = \big\|\bSigma^{1/2}(\bu'-\bu)\big\|_p \leq \|\bSigma^{1/2}\|_p \cdot \|\bu'-\bu\|_p \leq \eta \|\bSigma^{1/2}\|_p\leq \epsilon,
\end{align}
where the first inequality is due to the definition of induced matrix $p$-norm and the last inequality holds because $\eta = \epsilon/\|\bSigma^{1/2}\|_p$.
By the construction of $\cA$ and the fact that $\bu\in\cA$, we have $\bx\in\cE$. Combining \eqref{eq:lp transform}, this further implies that for any $\bu'\in\cA_{\eta}^{(\ell_p)}$, $\btheta+\bSigma^{1/2}\bu'\in\cE_{\epsilon}^{(\ell_p)}$. Thus, we have
\begin{align}
\label{eq:transform ineq}
    \nu\big(\cE_{\epsilon}^{(\ell_p)}\big) \geq \nu\big(\btheta + \bSigma^{1/2}\cdot \cA_\eta^{(\ell_p)}\big) = \Pr_{\bx\in\nu}\big[\bSigma^{-1/2}(\bx-\btheta)\in\cA_\eta^{(\ell_p)}\big] = \gamma_n\big(\cA_\eta^{(\ell_p)}\big),
\end{align}
where $\btheta + \bSigma^{1/2} \cdot \cA_\eta^{(\ell_p)}$ denotes the transformed subset $\{\btheta + \bSigma^{1/2}\bu: \bu\in \cA_\eta^{(\ell_p)}\}$.
Therefore, based on \eqref{eq:transform eq} and \eqref{eq:transform ineq}, we prove the soundness of \eqref{eq:transform}.

Finally, combining \eqref{eq:gii lp ext} and \eqref{eq:transform} completes the proof of Theorem \ref{thm:gii ext}.
\end{proof}

\subsection{Proof of the Optimality Results in Remark \ref{rem:gii ext}}
\label{proof:rem gii ext}

\begin{proof}
First, we prove the optimality for the spherical Gaussian case, where $\nu = \gamma_n$ and $p>2$. Let $\cH = \cH_{\bw,b}$ be a half space with axis-aligned weight vector, that said $\bw = \be_j$ for some $j\in[n]$. Intuitively speaking, the $\epsilon$-expansion of $\cH$ with respect to $\ell_p$-norm will only happen along the $j$-th dimension. More rigorously, we are going to prove the following results: for any $\epsilon\geq 0$,
\begin{align}
\label{eq:optimal case 1}
    \cH_\epsilon^{(\ell_p)} = \cH_\epsilon^{(\ell_2)}\: \text{ holds for any } p\geq 1.
\end{align}
By definition, $\cH = \{\bx\in\RR^n:x_j+b\leq 0\}$.
For any $\bx\notin\cH$, let $\hat\bx\in\cH$ be the closest point of $\bx$ in terms of $\ell_p$-norm. Since the weight vector $\wb$ of $\cH$ is axis-aligned, thus $\hat\bx$ will only differ from $\bx$ by the $j$-th element. That said, $\hat{x}_{j'} = x_{j'}$ for any $j'\neq j$ and $\hat{x}_j = -b$. Thus for any $p\geq 1$, we have $\|\bx-\hat\bx\|_p = \|\bx-\hat\bx\|_2 = x_j+b$. Based on this observation, we further obtain that for any $p\geq 1$,
$$
\cH_\epsilon^{(\ell_p)} = \{\bx\in\RR^n: x_j + b \leq \epsilon \} = \cH_\epsilon^{(\ell_2)},
$$
which proves \eqref{eq:optimal case 1}. According to the Gaussian Isoperimetric Inequality (Lemma \ref{lem:gii std}), we obtain
$$
\gamma_n\big(\cH_\epsilon^{(\ell_p)}\big) = \gamma_n\big(\cH_\epsilon^{(\ell_2)}\big) = \Phi(\Phi^{-1}(\gamma_n(\cH))+\epsilon).
$$
Therefore, combining this with Theorem \ref{thm:gii ext}, we prove the optimality for the spherical Gaussian case.

Now we turn to prove the non-spherical Gaussian case with $p=2$. Based on Theorem \ref{thm:gii ext}, the lower bound is $\Phi(\Phi^{-1}(\nu(\cE) + \epsilon/\|\bSigma^{1/2}\|_2)$ when $p=2$. In the following, we are going to prove: if we choose $\cE = \cH_{\bv_1,b}$, where $\bv_1$ is the eigenvector with respect to the largest eigenvalue of $\bSigma$, this lower bound is attained. Similarly to the proof of Theorem \ref{thm:gii ext}, we construct $\cA = \{\bSigma^{-1/2}(\bx-\btheta): \bx\in\cE\}$. 

Note that when $\cE$ is a half space, the constructed set $\cA$ is also a half space. 
In particular, for the case where $\cE = \cH_{\bv_1,b}$, for any $\bu\in\cA$, there exists an $\bx\in\RR^n$ such that $\bu = \bSigma^{-1/2}(\bx-\btheta)$ and $\bv_1^\top\bx+b\leq 0$. This implies that $\bv_1^\top\bSigma^{1/2}\bu + \bv_1^\top\btheta + b \leq 0$ for any $\bu\in\cA$. Since $\bv_1$ is the eigenvector of $\bSigma$, we further have that $\cA$ is a half space with weight vector $\bSigma^{1/2}\bv_1 = \|\bSigma^{1/2}\|_2\cdot \bv_1$.

Note that according to \eqref{eq:transform gen}, as in the proof of Theorem \ref{thm:gii ext}, for any $\cE\subseteq\RR^n$, we have
$$
    \nu(\cE) = \gamma_n(\cA) \:\text{ and }\: \nu\big(\cE_{\epsilon}^{(\ell_2)}\big) \geq \gamma_n\big(\cA_{\eta}^{(\ell_2)}\big), \text{ where } \eta = \epsilon / \|\bSigma^{1/2}\|_2.
$$
For $\cE = \cH_{\bv_1,b}$, based on the explicit formulation of $\ell_2$-distance to a half space, we can explicitly compute the $\eta$-expansion of $\cA$ as 
$$
    \cA_{\eta}^{(\ell_2)} = \{\bu\in\RR^n: \bv_1^\top\bSigma^{1/2}\bu + \bv_1^\top\btheta + b \leq \eta\cdot\|\bSigma^{1/2}\|_2\}.
$$
When we set $\eta = \epsilon/\|\bSigma^{1/2}\|_2$, it further implies that
$$
    \gamma_n\big(\cA_{\eta}^{(\ell_2)}\big) = \Pr_{\bu\sim\gamma_n} \big[ \bv_1^\top\bSigma^{1/2}\bu + \bv_1^\top\btheta + b \leq \epsilon \big] = \Pr_{\bx\sim\nu} \big[ \bv_1^\top\bx + b \leq \epsilon \big] = \nu\big(\cE_\epsilon^{(\ell_2)}\big).
$$
Finally, according to the optimality of the standard Gaussian Isoperimetric Inequality (Lemma \ref{lem:gii std}), this completes the proof.
\end{proof}

\section{Proofs of Theoretical Results in Section \ref{sec:empirical}}
\label{proof:empirical}

In this section, we present the proofs to the theoretical results presented in Section \ref{sec:empirical}.

\subsection{Proof of Lemma \ref{lem:lp dist to half space}}
\label{proof:lp formulation}

\begin{proof}[Proof of Lemma \ref{lem:lp dist to half space}]
We only consider the case when $\bw^\top\bx+b>0$, because $d_p(\bx,\cH_{\bw,b})$ is zero trivially holds if $\bw^\top\bx+b\leq 0$.
The problem of finding the $\ell_p$-distance from a given point $\bx$ to a half space $\cH_{\bw,b}$ can be formulated as the following constrained optimization problem:
\begin{align}
\label{eq:opt dist to hs}
    \min_{\bz\in\RR^n}\:\|\bz-\bx\|_p, \quad \text{subject to} \:\: \bw^\top\bz+b \leq 0.
\end{align}
Let $\tilde\bz = \bz-\bx$, then optimization problem \eqref{eq:opt dist to hs} is equivalent to 
\begin{align}
\label{eq:opt dist reformulate}
    \min_{\tilde\bz\in\RR^n}\:\|\tilde\bz\|_p, \quad \text{subject to} \:\: \bw^\top\tilde\bz+\bw^\top\bx+b \leq 0.
\end{align}
According to H\"older's Inequality, for any $\tilde\bz\in\RR^n$ we have 
$$
- \|\bw\|_q \cdot \|\tilde\bz\|_p \leq \bw^\top\tilde{\bz} \leq \|\bw\|_q \cdot \|\tilde\bz\|_p,
$$
where $1/p+1/q = 1$. Therefore, for any $\tilde\bz$ that satisfies the constraint of \eqref{eq:opt dist reformulate}, we have
\begin{align}
\label{eq:holder inq}
\bw^\top\bx + b \leq -\bw^\top\tilde{\bz} \leq \|\bw\|_q \cdot \|\tilde\bz\|_p.
\end{align}
Since $\|\bw\|_2 = 1$, we have $\|\bw\|_q>0$, thus \eqref{eq:holder inq} further suggests
$
\|\tilde\bz\|_p \geq (\bw^\top\bx+b) / \|\bw\|_q.
$

Up till now, we have proven that the optimal value of \eqref{eq:opt dist to hs} is lower bounded by $(\bw^\top\bx+b)/\|\bw\|_q$. The remaining task is to show this lower bound can be achieved. To this end, we construct $\hat\bz$ as
\begin{align*}
\hat{z}_j = x_j - \frac{\bw^\top\bx+b}{\|\bw\|_q} \cdot \bigg(\frac{\bw_j^q}{\sum_{j\in[n]}\bw_j^q}\bigg)^{1/p}, \quad \text{for any } j\in[n],
\end{align*}
where $1/p+1/q=1$.
We remark that for the extreme case where $p=\infty$, such choice of $\hat\bz$ can be simplified as $\hat\bz = \bx - (\bw^\top\bx+b) \cdot \sgn(\bw) /\|\bw\|_q$, where $\sgn(\cdot)$ denotes the sign function for vectors. According to the construction, it can be verified that
\begin{align*}
\bw^\top\hat\bz+b &= (\bw^\top\bx + b) - \frac{\bw^\top\bx+b}{\|\bw\|_q} \cdot \sum_{j\in[n]} w_j \cdot \bigg(\frac{\bw_j^q}{\sum_{j\in[d]}\bw_j^q}\bigg)^{1/p} = 0,
\end{align*}
and $\|\hat\bz-\bx\|_p = (\bw^\top\bx+b)/\|\bw\|_q$. 
\end{proof}

\subsection{Proof of Theorem \ref{thm:generalization}}
\label{proof:efficiency}

\begin{proof}[Proof of Theorem \ref{thm:generalization}]

We write $\cH\cS$ as $\cH\cS(n)$ for simplicity.
Let $S$ be a set of size $m$ sampled from $\mu$ and $\hat\mu_S$ be the corresponding empirical measure.
Note that the VC-dimension of $\cH\cS(n)$ is $n+1$ (see \citet{mohri2018foundations}), thus according to the VC inequality, we have 
$$
    \Pr_{S\leftarrow\mu^m}\big[ \sup_{\cE\in\cH\cS(n)} \big| \hat\mu_S(\cE)-\mu(\cE) \big| \geq \delta \big] \leq 8e^{(n+1)\log (m+1)-m\delta^2/32}.
$$
In addition, according to Lemma \ref{lem:lp dist to half space}, the $\epsilon$-expansion of any half space is still a half space. Therefore, we can directly apply Theorem 3.3 in \citet{mahloujifar2019empirically} to bound the generalization of concentration with respect to half spaces: for any $\delta\in(0,1)$, we have
\begin{align*}
    \Pr_{S\leftarrow\mu^m}\Big[ h_{\hat\mu_S}^{(\ell_p)}(\alpha-\delta,\epsilon, \cH\cS) - \delta \leq h_{\mu}^{(\ell_p)}(\alpha,\epsilon, \cH\cS) &\leq h_{\hat\mu_S}^{(\ell_p)}(\alpha+\delta,\epsilon, \cH\cS) + \delta \Big] \\ &\quad\geq 1 - 32 e^{(n+1)\log (m+1)-m\delta^2/32}.
\end{align*}
Finally, assuming the sample size $m\geq c_0\cdot n\log n/\delta^2$ for some constant $c_0$ large enough, then there exists positive constant $c_1$ such that
$$
    h_{\hat\mu_S}^{(\ell_p)}(\alpha-\delta,\epsilon, \cH\cS) - \delta \leq h_{\mu}^{(\ell_p)}(\alpha,\epsilon, \cH\cS) \leq h_{\hat\mu_S}^{(\ell_p)}(\alpha+\delta,\epsilon, \cH\cS) + \delta 
$$
holds with probability at least $1-c_1\cdot e^{-n\log n}$. 
\end{proof}

\section{Algorithm for Estimating Concentration}
\label{sec:alg}

\LinesNumberedHidden
\begin{algorithm}[ht]
\label{alg:heuristic search}
\setstretch{1.2}
\SetAlgoLined
\SetKwInOut{Input}{Input}
\SetKwInOut{Output}{Output}

\Input{\:a set of samples $\{\bx_i\}_{i\in[m]}$; strength $\epsilon$ (in $\ell_p$-norm); risk threshold $\alpha$; $\#$iterations $S$.}
    $\Qb \leftarrow$ compute the sample covariance matrix based on $\{\bx_i\}_{i\in[m]}$\;
    $\cV\leftarrow$ obtain the set of principal components by eigenvalue decomposition on $\Qb$\;
    \For{$\bv\in\cV$}{
        \For{$s=1,2,\ldots,S$}{
        $\bw\leftarrow$ select from $\{\pm \mathrm{pow}(\bv,s)\}$\tcp*{\textrm{$\mathrm{pow}()$ is defined according to \eqref{eq:pow function}}}
        $b\leftarrow \alpha$-quantile of the set $\{ -\bw^\top\bx_i: i\in[m]\}$\;
        $\AdvRisk_\epsilon(\cH_{\bw,b})\leftarrow \sum_{i=1}^m \ind(\bw^\top\bx_i + b \leq \epsilon \|\bw\|_q)/m$\;
        }
    }
$(\hat{\bw},\hat{b})\leftarrow\argmin_{(\bw,b)} \AdvRisk_\epsilon(\cH_{\bw,b})$\;
\Output{\:$\cH_{\hat\bw,\hat{b}}$}
\caption{Heuristic Search for Robust Half Space under $\ell_p$-distance}
\end{algorithm}

To solve the empirical concentration problem \eqref{eq:concentration equiv form}, Algorithm \ref{alg:heuristic search} searches for a desirable half space based on the principal components of the empirical dataset and their rotations defined by a power parameter.
More specifically, the function $\mathrm{pow}()$ takes a vector $\bv\in\RR^n$ and a positive integer $s\in\ZZ^+$, and returns the normalized $s$-th power of $\bv$ (with sign preserved):
\begin{equation}
\label{eq:pow function}
\mathrm{pow}(\bv,s) = \mathrm{sgn}(\bv) \circ [\mathrm{abs}(\bv)]^s / \|\bv^s\|_2 = {\setstretch{1.2}
\left\{
\begin{array} {ll}
\bv^s/\|\bv^s\|_2, & \quad\text{if } s \text{ is odd};\\
\mathrm{sgn}(\bv) \circ \bv^s / \|\bv^s\|_2, & \quad\text{otherwise}.
\end{array} 
\right. 
}
\end{equation}
Note that all the functions used in \eqref{eq:pow function} are element-wise  operations for vectors, where $\mathrm{sgn}(\bv)$, $\mathrm{abs}(\bv)$, $\bv^s$ represent the sign, absolute value and the $s$-th power of $\bv$ respectively, and the operator $\circ$ denotes the Hardamard product of two vectors.

Connected with the theoretical optimum regarding Gaussian spaces in Remark \ref{rem:gii ext}, the top principal component corresponds to the optimal choice of $\bw$ if the perturbation metric is $\ell_2$-distance, whereas close-to-axis would be favourable for $\bw$ when $p>2$.
In addition, as implied by the empirical concentration problem \eqref{eq:concentration equiv form} and the monotonicity of $\ell_p$-mapping (Lemma \ref{lem:monotone lp}), the value of $\|\bw\|_q$ will be more 
influential in affecting the $\epsilon$-expansion of half space as $p$ grows larger. For example, the $\ell_\infty$-norm of $\bw$ can be as large as $\sqrt{n}$ for the worst case ($n$ denotes the input dimension), while $\|\bw\|_\infty = 1$ if $\bw$ aligns any axis.
By searching through the region between each principal component and the closest axis, the proposed algorithm aims to find the optimal balance between $\|\bw\|_q$ and the variance of the given data along $\bw$ that leads to the smallest $\epsilon$-expansion.
Although there is no theoretical guarantee that our algorithm will find the optimum to \eqref{eq:concentration equiv form} for an arbitrary dataset, we empirically show (in Section~\ref{sec:experiments}) its efficacy in estimating concentration across various datasets.

Moreover, our algorithm is efficient in terms of both time and space complexities. Precomputing the principal components requires $O(mn^2+n^3)$ time and $O(n^2)$ space to store them, where $m$ denotes the samples size and $n$ is the input dimension. For each iteration step, the time complexity of computing $\bw, b$ and $\AdvRisk_{\epsilon}(\cH_{\bw,b})$ is $O(mn)$, while the space complexity for saving the intermediate variables and the best parameters is $O(m+n)$.
With $n$ outer iterations and $S$ inner iterations, the total time complexity is $O(n^3+mn^2S)$. The total space complexity is $O(n^2+mn)$, where the extra $O(mn)$ denotes the initial space requirement for saving all the input data. For our experiments, we observe 
$\AdvRisk_{\epsilon}(\cH_{\bw,b})$ is not sensitive to small increment of the exponent parameter $s$, thus we choose to increase $s$ in a more aggressive way, which further saves computation.

\section{Additional Experiments}
\label{sec:additional results}

Figure \ref{fig:appdx convergence plots} shows the convergence performance of our algorithm under different experimental settings: $\alpha = 0.5,\;\epsilon = 1$ for the simulated Gaussian dataset, $\alpha = 0.01,\;\epsilon = 0.4$ for MNIST, and $\alpha = 0.05,\;\epsilon = 16/255$ for CIFAR-10. Under these additional settings, the algorithm proposed by \citet{mahloujifar2019empirically} 
either cannot provide meaningful estimates of concentration, or takes a substantial amount of time to run. For instance, our algorithm takes around $2$ days to generate the convergence curve on CIFAR-10 ($\alpha=0.05$, $\epsilon=16/255$), whereas the previous method is at least $5$ times slower, due to the large number of rectangles $T$ needed.
Thus, we only report the convergence curves of our method, where the standard deviations are calculated over $3$ repeated trials.

\begin{figure*}[tb]
  \centering
    \subfigure[$\cN(\zero, \Ib_{784})$]{
    \includegraphics[width=0.315\textwidth]{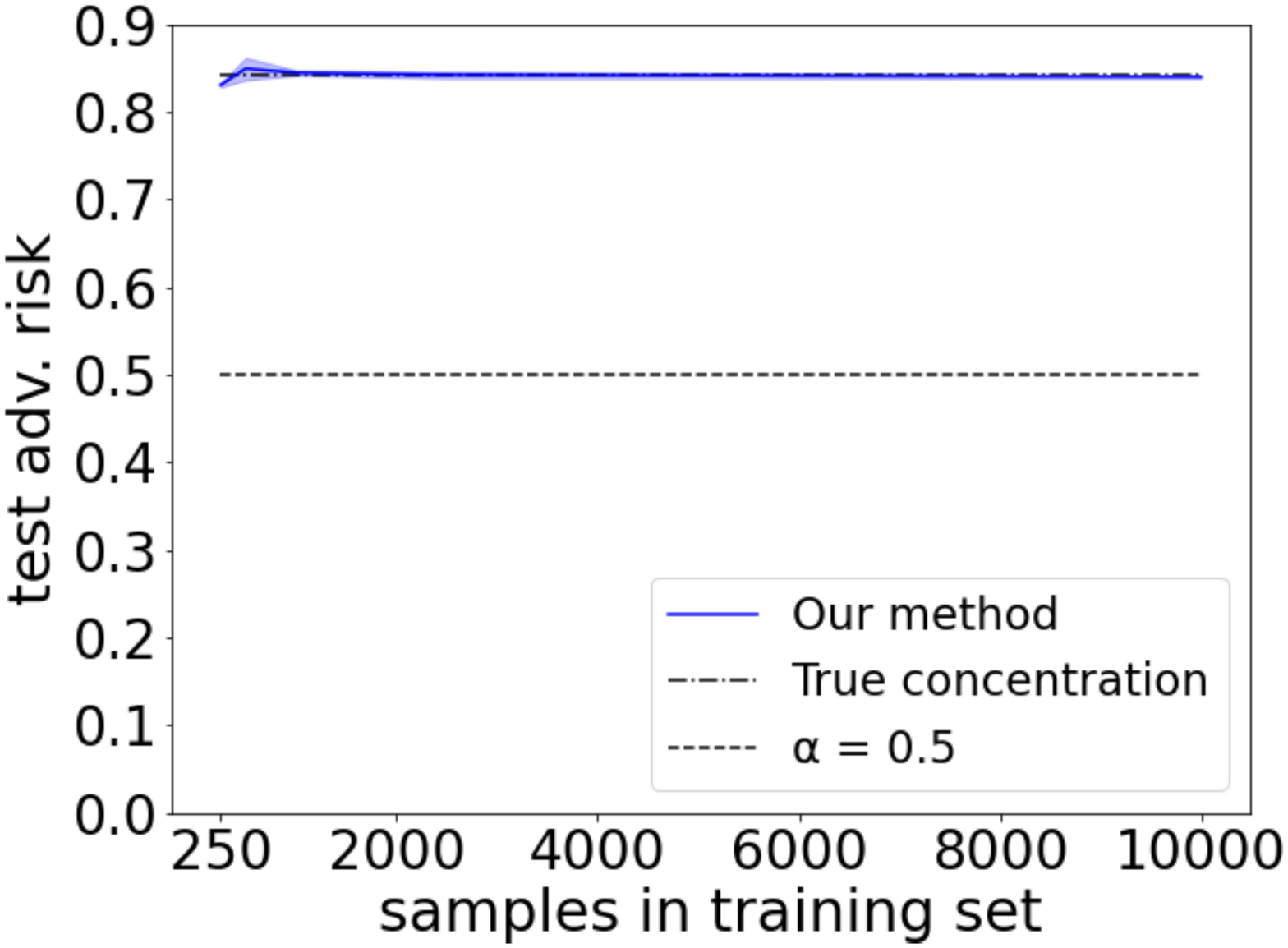}}
    \hspace{0.01in}
    \subfigure[MNIST]{
    \includegraphics[width=0.315\textwidth]{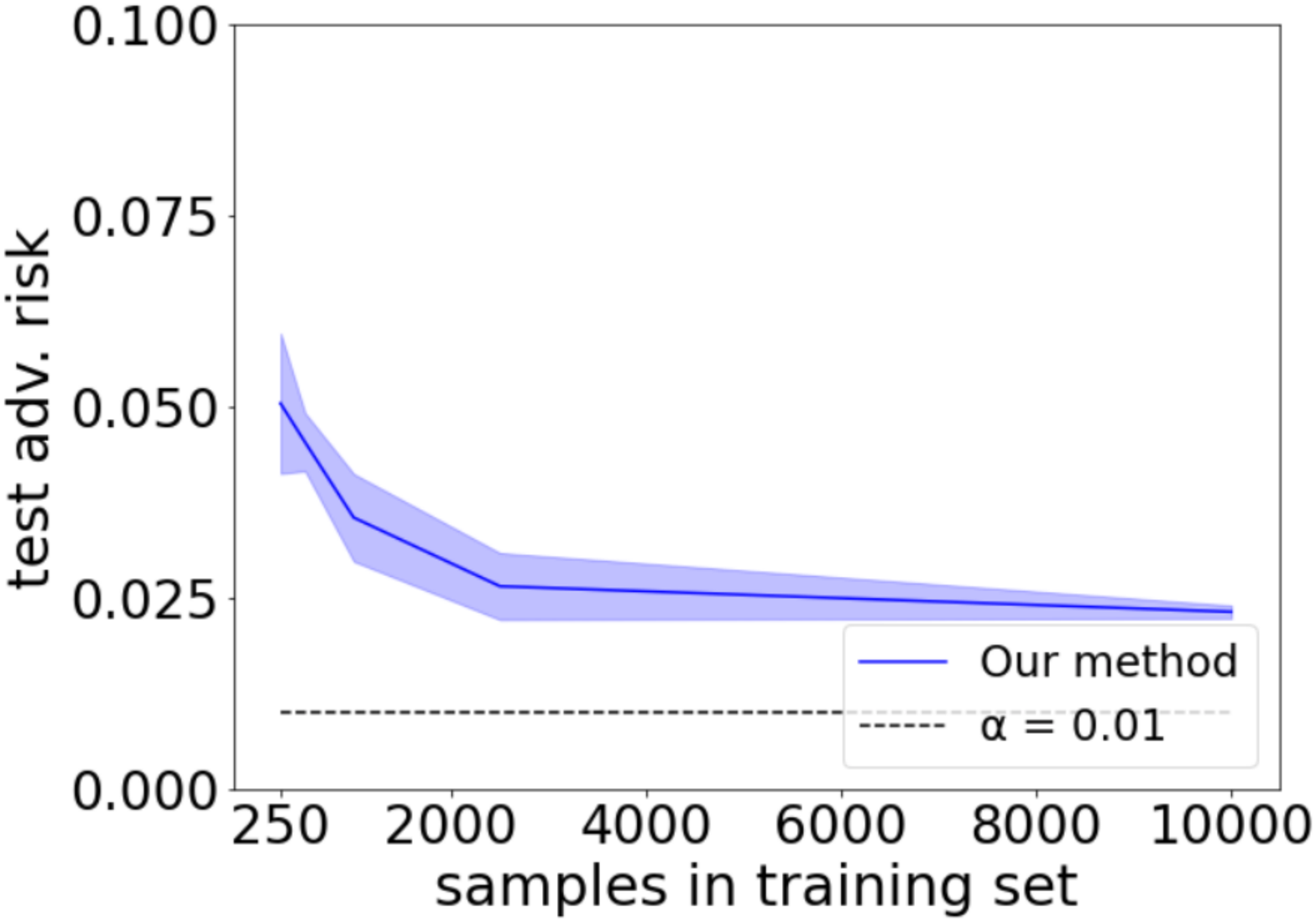}}
    \hspace{0.01in}
    \subfigure[CIFAR-10]{
    \includegraphics[width=0.315\textwidth]{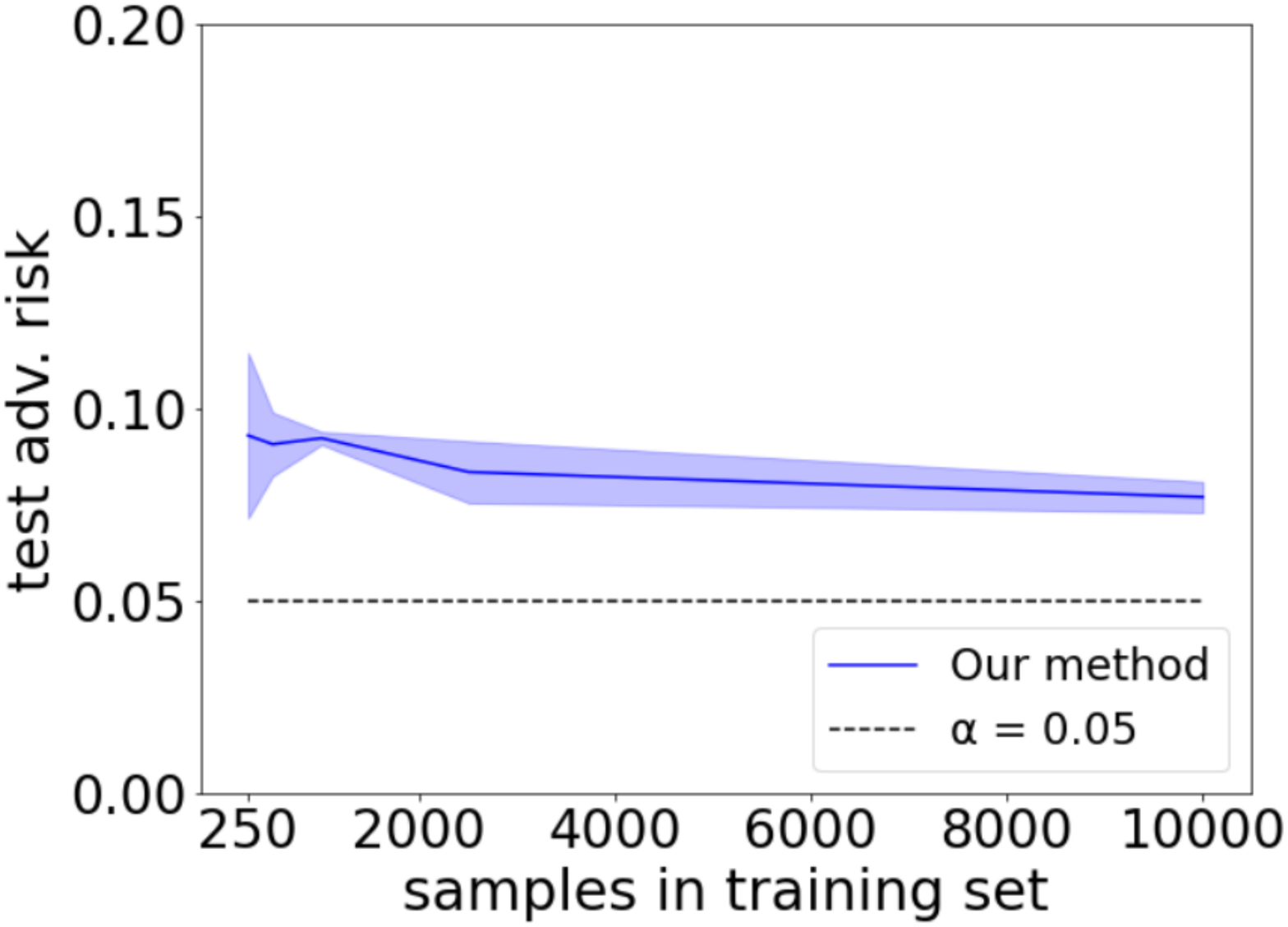}}
  \caption{The convergence curves of the best possible adversarial risk estimated using our method under various settings as the sample size of the training dataset increases.}
  \label{fig:appdx convergence plots}
\end{figure*}

\end{document}